\newenvironment{proof}{\noindent{\sc Proof.}}{\qed}
\newtheorem{theorem}{Theorem}[section]
\newtheorem{lemma}{Lemma}[section]
\newtheorem{cor}{Corollary}[section]
\newtheorem{rem}{Remark}[section]
\newtheorem{definition}{Definition}[section]
\newtheorem{uda}{Example}[section]
\newcommand{\qed}{$\blacksquare$}
\def\bhag#1{\noindent
\setcounter{equation}{0}
\section{#1}
}
\def\RR{{\mathbb R}}
\def\ZZ{{\mathbb Z}}
\def\SS{{\mathbb S}}
\def\x{\mathbf{x}}
\def\y{\mathbf{y}}
\def\w{\mathbf{w}}
\def\O{{\cal O}}
\def\C{{\mathcal C}}
\def\YY{\mathbb{Y}}
\def\be{\begin{equation}}
\def\ee{\end{equation}}
\def\bea{\begin{eqnarray}}
\def\eea{\end{eqnarray}}
\def\eref#1{(\ref{#1})}
\def\disp{\displaystyle}
\def\donchitre#1#2{\vskip 6.5cm\noindent
\parbox[t]{1in}{\special{eps:#1.eps x=6.5cm y=5.5cm}}
\hbox to 7cm{}\parbox[t]{0.0cm}{\special{eps:#2.eps x=6.5cm y=5.5cm}}}
\def\XX{{\mathbb X}}
\def\BB{{\mathbb B}}
\def\supp{\mathsf{supp}}
\title{Dimension independent bounds for general shallow networks}
\author{
 H.~N.~Mhaskar\thanks{
Institute of Mathematical Sciences, Claremont Graduate University, Claremont, CA 91711. The research of this author is supported in part by the Office of the Director of National Intelligence (ODNI), Intelligence Advanced Research Projects Activity (IARPA), via 2018-18032000002.
\textsf{email:} hrushikesh.mhaskar@cgu.edu} 
 }
 \date{}
\begin{document}

\maketitle

\begin{abstract}
This paper proves an abstract theorem  addressing in a unified manner two important problems in function approximation: avoiding curse of dimensionality and estimating the degree of approximation for out-of-sample extension in manifold learning. 
We consider an abstract (shallow) network that
 includes, for example,  neural networks,  radial basis function networks, and kernels  on data defined manifolds used for function approximation in various settings. 
 A deep network is obtained by a composition of the shallow networks according to a directed acyclic graph, representing the architecture of the deep network.

In this paper, 
we prove dimension independent bounds for approximation by shallow networks in the very general setting of what we have called $G$-networks on a compact metric measure space, where the notion of dimension is defined in terms of the cardinality of maximal distinguishable sets, generalizing the notion of dimension of a cube or a manifold.
Our techniques give bounds that improve without saturation with the smoothness of the kernel involved in an integral representation of the target function.
In the context of manifold learning, our bounds provide estimates on the degree of approximation for an out-of-sample extension of the target function to the ambient space.

One consequence of our theorem is that without the requirement of robust parameter selection, deep networks using a non-smooth activation function such as the ReLU, do not provide any significant advantage over shallow networks in terms of the degree of approximation alone.

\end{abstract}

\noindent\textbf{Keywords:} Shallow and deep networks, dimension independent bounds, out-of-sample extension,\\
 tractability of integration.

\bhag{Introduction}\label{intsect}

An important problem in machine learning is to approximate a target function $f$ defined on some compact subset of a Euclidean space $\RR^Q$ by a model $P$, e.g., a neural network, radial basis function network, or a kernel based model. 
A central problem in this theory is to estimate the complexity of approximation; i.e., (loosely speaking) to obtain a bound on the number of parameters in $P$ in order to \emph{ensure} that $f$ can be approximated by $P$ within a prescribed accuracy $\epsilon>0$.
Typically, the number of parameters  grows  as a function of $\epsilon^{-1/Q}$; i.e., exponentially with $Q$, a phenomenon known as the curse of dimensionality.

One approach to mitigate the curse of dimensionality is to assume that the data comes from an unknown manifold of a  low dimension $q$ embedded in $\RR^Q$. 
The subject of manifold learning deals with questions of approximation on this manifold, typically based on the eigenfunctions of some differential operator on the manifold or some kernel based methods  (e.g., \cite{rosasco2010learning, rudi2017falkon, sergei_multiple_kernels2017}).
One major problem in this domain of ideas is that the models used for the approximation are based on the manifold alone, which is determined by the existing data. 
Therefore, if a new datum arrives, it might require a change of the manifold, equivalently, starting the computation all over again. 
This is called the problem of out-of-sample extension.
In kernel based methods, it is typically solved using
the so called Nystr\"om extension, but estimating the degree of approximation on the ambient space is an open problem as far as we are aware.

In \cite{poggio2016_cbmm58, dingxuanpap}, we have argued that
deep networks are able to overcome the curse of dimensionality using what we have called the ``blessing of compositionality''.  
We have observed that many functions $f$ of practical interest have a compositional structure.
Although shallow networks cannot take advantage of this fact, deep networks can be built to have the same compositional structure.
For example, we consider a function $F$ of $4$ variables with the structure
$$
F(x_1,x_2,x_3,x_4)=f(f_1(x_1,x_2),f_2(x_3,x_4)).
$$
We then construct shallow networks $P, P_1,P_2$ to approximate the bivariate functions $f, f_1,f_2$ respectively.
Under appropriate assumptions on the smoothness classes of these functions, the number of parameters in the deep network $P(P_1(x_1,x_2),P_2(x_3,x_4))$ required to ensure an accuracy of $\epsilon$ in the approximation of $F$ is $\O(\epsilon^{-r/2})$, where $r$ is a parameter associated with the smoothness of the functions involved. 
In contrast, a shallow network is unable to simulate the compositional structure, and hence, must treat $F$ as a function of $4$ variables. 
The resulting estimate on the number of parameters is then $\O(\epsilon^{-r/4})$.

We note that compositionality is a property of the expression of a function, not an intrinsic property of the function itself.
A simple example in the univariate case is the constant function $f(x)\equiv 2$, $x\in [0,1]$, that can also be expressed as a compositional function
$$f(x)=(x+1)\cosh\left(\log\left(\frac{2+\sqrt{3-2x-x^2}}{x+1}\right)\right), \qquad x\in [0,1].$$
It is therefore natural to ask for which functions shallow networks can already avoid the curse of dimensionality.

The main purpose of this paper is to address the following two problems: (1)  dimension-independent bounds in approximation by shallow networks, (2) approximation bounds for an out-of-sample extension in manifold learning. 
There is a by-product of our results that is of interest in information based complexity.
An important problem in that theory is to obtain bounds on the discretization error for integrals in a high dimensional setting.
Much of the work in this direction (e.g., \cite{dick2010digital}) is focused on integration on a cube (or the whole Euclidean space) with a weight function having a tensor product structure.
Our result proves dimension independent bounds in a very general setting that does not require a tensor product structure, neither in the domain of integration nor for the measure with respect to which the integral is taken.

In Section~\ref{techsect}, we give a more technical introduction, including in precise terms the notion of curse of dimensionality, and a review of some ideas involved in function approximation.
We explain our set-up and discuss the main theorem in Section~\ref{mainsect}. 
The main theorem is illustrated in a number of examples in Section~\ref{appsect} : approximation of functions on the sphere (and hence, the Euclidean space) by networks using ReLU activation functions (Corollary~\ref{relucor}), approximation of functions on the sphere using a class of zonal function networks using a positive definite activation function (Corollary~\ref{zfcor}), approximation of functions on a cube using certain radial basis function networks (Corollary~\ref{cubecor}) and approximation of functions on a manifold and their out-of-sample (Nystr\"om) extensions to the ambient space (Corollary~\ref{manifoldcor}).
The proof of the main theorem is given in Section~\ref{pfsect}.

\bhag{Technical introduction}\label{techsect}
We consider the problem of approximating a function $f$ defined on a compact subset $\XX$ of some Euclidean space $\RR^Q$ by mappings of the form  $x\mapsto\sum_{j=1}^N a_jG(x,y_j)$, where $G :\XX\times\XX\to\RR$ is a kernel (not necessarily symmetric), $x, y_j\in\XX$, and $a_j$'s are real numbers. 
We will refer to such a mapping as a (shallow) $G$-network with $N$ neurons, and denote the class $\{\sum_{j=1}^N a_jG(\circ, y_j) : a_1,\cdots,a_N\in\RR, y_1,\cdots,y_N \in \XX\}$ of all such networks by $\mathsf{span}_N(G)$. 
For example, the action of a neuron, $\sigma(\w\cdot\x'+b)$, can be expressed as $G(\x,\y)=\sigma(\x\cdot\y)$, where $\x=(\x',1)$, and $\y=(\w,b)$, that of a radial basis function by $G(\x,\y)=\Phi(|\x-\y|_2)$, etc.
A deep $G$-network is obtained by composition of such networks according to some directed acyclic graph.

An important problem in this theory is to estimate the  \emph{degree of approximation} to $f$ from $\mathsf{span}_N(G)$, defined by
\be\label{deg_of_app_def}
\mathsf{deg}_N(G;f)=\inf_{a_1,\cdots,a_N, y_1,\cdots,y_N}\left\|f-\sum_{j=1}^N a_jG(\circ,y_j)\right\|_{\mathcal{X}}=\inf_{P\in \mathsf{span}_N(G)}\|f-P\|_{\mathcal{X}},
\ee
where is $\mathcal{X}$ is some Banach space of functions on $\XX$.
In theoretical analysis, one assumes some prior on $f$, encoded by the assumption that $f\in\mathbb{K}$ for some compact subset $\mathbb{K}$ of a Banach space $\mathcal{X}$. The set $\mathbb{K}$ is known in approximation theory parlance as the \emph{smoothness class}. 
A central problem in the theory is then to estimate the \emph{worst case error} 
\be\label{worst_err}
\mathsf{wor}_N(\mathbb{K})=\sup_{f\in \mathbb{K}}\mathsf{deg}_N(G;f).
\ee
From a practical point of view, one seeks a constructive procedure to realize at least sub-optimally the infimum expression in \eref{deg_of_app_def}. 
This is described in abstract terms as follows. Let
the \emph{parameter selector} $\Theta_N : \mathbb{K}\to\RR^N\times \XX^N$  be given by $\Theta_N(f)=(a_{1,N}(f),\cdots,a_{N,N}(f), y_{1,N}(f),\cdots,y_{N,N}(f))$.
If $\Theta_N$ is a continuous map, we say that it is a \emph{robust} parameter selector.
We define the error in  approximation to $f$ using this mapping by
\be\label{degdef}
\mathsf{err}_N(f,\Theta_N)=\left\|f-\sum_{j=1}^N a_{j,N}(f)G(\circ,y_{j,N}(f))\right\|_{\mathcal{X}}.
\ee
 Instead of $\mathsf{wor}_N(\mathbb{K})$ one seeks to estimate
\be\label{opt_recovery}
\mathsf{opt}_N(\mathbb{K})=\inf_{\Theta_N}\sup_{f\in \mathbb{K}}\mathsf{err}_N(f,\Theta_N(f)),
\ee
where the infimum is taken over all robust parameter selectors $\Theta_N$.

We note that in the expression for $\mathsf{deg}_N(G;f)$, the  parameters $a_j$ and $y_j$ are allowed to be selected adaptively on the target function $f$ involved. 
In contrast, the definition of $\mathsf{err}_N$ involves a fixed parameter selection procedure all $f\in \mathbb{K}$.
Therefore,  $\mathsf{deg}_N(G;f)\le \mathsf{err}_N(f,\Theta_N)$ for all parameter selectors $\Theta_N$.  
It is not clear whether for every $f\in \mathbb{K}$, there exists a unique best approximation from the space $\mathsf{span}_N(G)$. 
If this is the case, let the unique best approximation to $f$ be $\sum_{j=1}^N a_{j,N}^*(f)G(\circ,y_{j,N}^*(f))$, and $\Theta_N^*(f)=(a_{1,N}^*(f),\cdots,a_{1,N}^*(f), y_{1,N}^*(f), \cdots, y_{N,N}^*(f))$. 
Then by definition, $\mathsf{deg}_N(G;f)=\mathsf{err}_N(f,\Theta_N^*)$ for every $f\in \mathbb{K}$.
If it can be proved that $\Theta_N^*$ is also a continuous mapping on $\mathbb{K}$, then 
$\mathsf{wor}_N(\mathbb{K})= \mathsf{opt}_N(\mathbb{K})$.
We note, however, that the issue of existence of best approximation, its uniqueness, and the continuity of the parameters involved are not immediately obvious even in the most classical case of polynomial approximation on an interval.

We digress to make a note on terminology. 
The term degree of approximation of $f$ from $\mathsf{span}_N(G)$ is defined by \eref{deg_of_app_def}.
However, the quantity $\mathsf{err}_N(f,\Theta_N)$ is also referred to as the degree of approximation to $f$ by networks prescribed by the summation expression in \eref{degdef}. 
The terms error in approximation (or approximation error) are also used to indicate degree of approximation.
The terms rate (or accuracy) of approximation refers to an upper estimate on the degree of approximation.

Many classes $\mathbb{K}$ used in this theory suffer from the so-called curse of dimensionality (cf. \cite{devore1989optimal}) :
\be\label{curse}
\mathsf{opt}_N(\mathbb{K}) \ge cN^{-r/Q},
\ee
where $r$ is a ``smoothness parameter'' associated with $\mathbb{K}$.
The curse of dimensionality is avoided either by assuming a different prior on the target function or by dropping the requirement that the parameter selector be robust.
The purpose of this paper is to explore the second option.
We will show that even  the smoothness classes typically studied in the literature that give rise to the curse of dimensionality do not suffer from the same if we do not require the parameter selector to be robust.
This is observed in \cite{devore1989optimal}, where there was no restriction on the parameter selector and the recovery algorithm, so that a space-filling curve could be used in theory.
In our setting, the parameter selector has a specific meaning and the recovery algorithm consists of constructing a $G$-network using these parameters. 

In order to motivate our work, we first review some of the ideas in the existing work on the estimation of degree of approximation by shallow networks.

First, it is clear that if the parameter selector $\Theta_N$ is robust, then $\sum_{k=1}^N |a_{j,N}(f)|\le c_N$, where $c_N>0$ is a constant independent of $f\in\mathbb{K}$. 
It is sometimes assumed (or even proved under suitable conditions) in the literature that $c_N$ can be chosen independent of $N$ as well (e.g., \cite{schmidt2017nonparametric, schmidt2019deep}).
Then it is easy to see that in order for the sequence of networks to converge to $f$, it is necessary that $f$ must admit a representation of the form
\be\label{basic_representation}
f(x)=\int_\XX G(x,y)d\tau(y)
\ee
for some (signed) measure $\tau$ having a bounded total variation on $\XX$.
This total variation has been referred to as the $G$-variation  of $f$ \cite{kurkova1, kurkova2}.
Using probabilistic estimates, it is then possible to obtain  the bound $\mathsf{deg}_N(G;f)= \O(\sqrt{\log N/N})$.
Many other results of this type have been obtained in the literature (e.g., \cite{barron1992neural, barron1993, klusowski2016uniform, tractable, kuurkova2018constructive}). 
All of these either assume explicitly or deduce from the assumptions in these papers that a representation of the form \eref{basic_representation} holds. 
Also, the error bounds neither require nor  depend upon the smoothness of $G$.

A representation of the form \eref{basic_representation} holds also for many classes for which the curse of dimensionality applies.
For example, let $\XX$ be the unit sphere $\SS^Q$ of the Euclidean space $\RR^Q$, $\Delta$ be the (negative) Laplace-Beltrami operator on $\SS^Q$, $r\ge 1$ be an integer, and we consider $\mathbb{K}$ to be the class of all continuous functions $f$ on $\SS^Q$ for which $(I+\Delta)^r f$ is continuous.
The so-called non-linear $N$ width for this class is $\sim N^{-r/Q}$ (cf. \cite{lizorkin1994nikol}). 
However, if $G$ is the Green function for the operator $(I+\Delta)^r$, then every $f\in \mathbb{K}$ has a representation of the form 
$$
f(\x)=\int_{\SS^Q}G(\x,\y)((I+\Delta)^r f)(\y)d\mu^*(\y),
$$
where $\mu^*$ is the volume element of $\SS^Q$. 
Indeed, this fact is used critically in our work \cite{zfquadpap} on approximation by zonal function networks using $G$ as the activation function, where we gave explicit constructions based entirely on the data $\{(\x_j, f(\x_j))\}$ with no stipulations on the locations where the sampling nodes $\x_j$ are.
Similar representations play a critical role in similar estimates in approximation theory (e.g., \cite[Chapter~7, Section~4]{devlorbk}), including many papers of ours, e.g., \cite{eignet, sphrelu}.
In some sense, this is the other extreme of the kind of results on the degree of approximation, where the smoothness of $G$ is the only determining factor.
Clearly, probabilistic ideas can be used to obtain dimension independent bounds instead, if only we give up the requirement of a robust parameter selection.
However, the challenge here is not to loose the advantage offered by the smoothness of $G$.

To summarize, in both of these approaches, one has an integral representation of the form \eref{basic_representation}, but  get different bounds depending upon the norm and the method used.

In this paper, we will consider a very general set-up where $\XX$ can be an arbitrary compact metric measure space, and consider functions that admit a representation of the form \eref{basic_representation}. 
Giving up the requirement of robust parameter selector, we will use an idea in the paper \cite{bourgain1988distribution} of Bourgain and Lindenstrauss to obtain dimension independent bounds (cf. Definition~\ref{dimensiondef}) in the uniform norm provided some very mild conditions hold.
This technique involves aspects from both the approaches mentioned above.
Thus, we will use concentration inequalities as in the first approach.
Our conditions on $G$ will be in terms of approximation of $G$ using a fixed basis as in the second approach.

We will elaborate more about the highlights below in the paper at appropriate places, but they can be summarized as follows.
\begin{itemize}
\item Our bounds are in the uniform norm. 
We have argued in \cite{mhaskar2018analysis} that the usual measurement of generalization error using the expected value of the least square loss is not applicable for approximation theory for deep networks; one has to use the uniform approximation to take full advantage of the compositional structure. 
Moreover, results about shallow networks can then be ``lifted'' to those about deep networks using a property called good propagation of errors.
\item Our results combine the advantages of the probabilistic approach to obtain dimension independent bounds and the classical approximation theory approach where  the higher the smoothness of the activation function, the better the bounds on the degree of approximation.
\item We allow the measure $\tau$ to be, for example, supported on a sub-manifold $\YY$ of a manifold  $\XX$. 
Under certain conditions, the constants involved in the bounds on the degree of approximation depend upon the  sub-manifold $\YY$ alone.
\item At the same time, taking $G$ to be a kernel well defined on the ambient space $\XX$, our bounds provide estimates on the degree of approximation for the out-of-sample extension of the target function to the entire space $\XX$. 
The asymptotic behavior of these bounds is  also  independent of the dimension of the ambient space, but the constants may depend upon the dimension of the ambient space. 
\end{itemize}

\bhag{The set-up and main theorem}\label{mainsect}
In Section~\ref{basicsect}, we formulate our general setting. 
The notion of dimension is developed in Section~\ref{dimensionsect}.
The notion of local smoothness of a function on a metric space, and the class of kernels that enter into \eref{basic_representation} is described in Section~\ref{kernelsect}. 
In Section~\ref{measuresect}, we introduce the measure theoretic concepts concerning the class of measures we wish to use in \eref{basic_representation}.
With this preparation, the main theorem is stated and discussed in Section~\ref{theoremsect}.

\subsection{Basic set-up}\label{basicsect}
Let $\XX$ be a compact metric space, $\rho$ be the metric on $\XX$, and $\mu^*$ be a probability measure on $\XX$.
If $x\in\XX$, $\delta>0$, the ball of radius $\delta$ centered at $x$ is denoted by $\mathbb{B}(x,\delta)$; i.e.,
\be\label{balldef}
\mathbb{B}(x,\delta)=\{y\in\XX : \rho(x,y)\le \delta\}.
\ee
If $A\subseteq \XX$, it is convenient to denote $\mathbb{B}(A,\delta)=\bigcup_{x\in A}\mathbb{B}(x,\delta)$.
We will denote the closure of $\XX\setminus \mathbb{B}(A,\delta)$ by $\Delta(A,\delta)$.

In the sequel, we assume that there exist $Q,\kappa_1,\kappa_2>0$ such that
\be\label{ballmeasure}
\kappa_1\delta^Q\le \mu^*(\BB(x,\delta))=\mu^*\left(\{y\in \XX: \rho(x,y)<\delta\}\right)\le \kappa_2\delta^Q, \qquad x\in\XX,\ 0< \delta\le 1.
\ee
As the examples below show, $Q$ serves as a dimension parameter for the ambient space $\XX$.
In Definition~\ref{dimensiondef}, we will define the notion of dimension more formally, without requiring a measure. 

If $A\subseteq \XX$, the symbol $C(A)$ denotes the class of bounded, real valued, uniformly continuous functions on $A$, equipped with the supremum norm $\|\cdot\|_A$. We will omit the subscript $A$ if $A=\XX$, and write $\|\cdot\|=\|\cdot\|_\XX$.
Let $\{\Pi_k\}$ be a nested sequence of finite dimensional subspaces of $C(\XX)$: $\Pi_0\subset \Pi_1\subset \Pi_2\subset\cdots$, with the dimension of $\Pi_k$ being $D_k$, $k\in\ZZ_+=\{0,1,\cdots\}$.\\

\noindent\textbf{Constant convention:}\\
\textit{
In the sequel, the symbols $c, c_1,\cdots$ will denote generic positive constants depending only on the  fixed quantities under discussion, such as the smoothness parameters, $\kappa_1$, $\kappa_2$, $Q$, the dimensions, etc. 
Their values may be different at different occurrences, even within a single formula. The notation $A\sim B$ means $c_1A\le B\le c_2A$. \qed}\\

\begin{uda}\label{sphere_set_up}
{\rm
Let 
$$
\XX=\SS^Q=\{(x_1,\cdots,x_{Q+1})=\x\in\RR^{Q+1} : |\x|_2^2=\sum_{k=1}^{Q+1} x_k^2=1\}.
$$
We let $\rho$ be the geodesic distance on $\XX$, $\mu^*$ be the volume measure on $\XX$, normalized to be a probability measure. 
The space $\Pi_n=\Pi_n^Q$ is the space of all spherical polynomials of degree $<n$; i.e., the restriction to $\SS^Q$ of algebraic polynomials in $Q+1$ variables of total degree $<n$. 
The dimension of $\Pi_n$ is  $\O(n^Q)$.
\qed}
\end{uda}

\begin{uda}\label{cube_set_up}
{\rm
Let $\XX= [-1,1]^Q$, $\mu^*$ being the Lebesgue measure, normalized to be a probability measure, $\rho(\x,\y)=|\x-\y|_2$.
Let $\Pi_n$ be the class of all polynomials of total degree $<n$. The dimension of $\Pi_n$ is $\O(n^Q)$.
\qed}
\end{uda}

\begin{uda}\label{gen_measure_space}
{\rm
It is possible to convert any measure space that admits a non-atomic measure into a compact metric measure space with the properties as described above. 
Let $\XX$ be any measure space with a non-atomic probability measure $\mu^*$; i.e., for any measurable $A\subset \XX$ with $\mu^*(A)>0$, there exists a measurable subset $B\subset A$ with $0<\mu^*(B)<\mu^*(A)$. 
Then using ideas described in \cite[Chapter~VIII, Section~40]{halmos2013measure}, we obtain a nested sequence $\{A_{k,n}\}_{k=0}^{2^{n-1}}$ of partitions  of $\XX$ such that $\mu^*(A_{k,n})=2^{-n}$, $A_{0,0}=\XX$, and for $n\ge 1$, each $A_{k,n}\subset A_{\lfloor k/2\rfloor,n-1}$, where $\lfloor k/2\rfloor$ is the integer part of $k/2$. 
For each $n\ge 0$, we can then associate $A_{k,n}$ with an interval of the form $I_{k,n}=[k/2^n, (k+1)/2^n)$. 
Thus,  every point in $\XX$ corresponds to a unique number $\varepsilon(x)$ of the form $\sum_{j=0}^\infty a_j(x)2^{-j-1}$, where each $a_j(x)\in \{0,1\}$, and an infinite tail of $1$'s is prohibited in the sequence $\{a_j(x)\}$. 
We define an equivalence class on $\XX$ by writing $x\sim y$ if $\varepsilon(x)=\varepsilon(y)$, and replace $\XX$ by its corresponding quotient space, so that the correspondence $\varepsilon$ is one-to-one.
We note that for any measurable subset $A\subseteq \XX$, $\mu^*(A)$ is preserved under this operation.
We define a metric on $\XX$ by
\be\label{metricdef}
\rho(x,y)=\sum_{j=0}^\infty (a_j(x)\oplus a_j(y))2^{-j-1},
\ee
where $\oplus$ denotes the exclusive or between the digits.
Then $\XX$ is a compact metric space with this metric.
It is not difficult to verify that if $x\in A_{k,n}$ (equivalently, $\varepsilon(x)\in I_{k,n}$),  $y\in\XX$, and $\rho(x,y)<1/2^n$ then $y\in A_{k,n}$ as well. Conversely, if $x,y\in A_{k,n}$ then $\rho(x,y)\le 1/2^n$. So, by the construction of the partition $\{A_{k,n}\}$, \eref{ballmeasure} is satisfied with $Q=1$.
A nested sequence of subspaces of  $C(\XX)$ can then be constructed using the ideas in \cite{treepap}, where the question of degree of approximation is also considered in detail.
However, since the ``dimension parameter'' $Q=1$ in this case, we will not pursue this example further in this paper.
\qed}
\end{uda}

\subsection{Dimension of a family of sets}\label{dimensionsect}
Next, we define some abstract ideas, starting with the notion of a dimension for a subset of $\XX$. 
Specific examples will be given in detail in Section~\ref{appsect}.

For a finite subset $\C\subseteq \XX$ with $|\C|\ge 2$, and a compact subset $K\subseteq \XX$, we write
\be\label{meshnormdef}
\delta(\C;K)=\sup_{x\in K}\min_{y\in\C}\rho(x,y), \qquad \eta(\C)=\min_{y,z\in\C, y\not=z}\rho(y,z).
\ee
We will omit the mention of $K$ if $K=\XX$. 
Let $\epsilon>0$. 
A finite subset $\C\subseteq K$ is called  $\epsilon$-distinguishable if $\eta(\C)\ge \epsilon$. 
It is easy to check that if $\C$ is a maximal $\epsilon$-distinguishable subset of $K$ then
\be\label{netcover}
K \subseteq \bigcup_{y\in\C}\BB(y,\epsilon), \qquad \BB(y,\epsilon/3)\cap \BB(z,\epsilon/3)=\emptyset, \mbox{ if } y,z\in\C, \ y\not=z.
\ee
In particular, $\eta(\C)=\delta(\C;K)=\epsilon$. 
Moreover,   using a volume argument and \eref{ballmeasure}, we see that if $K=\XX$, then $\kappa_2^{-1}\epsilon^{-Q}\le |\C|\le 3^Q\kappa_1^{-1} \epsilon^{-Q}$.

If $A\subseteq\XX$, $\epsilon>0$,  we denote by $H_\epsilon(A)$ the number of points in a maximal $\epsilon$-distinguishable set for the closure of $A$.

\begin{definition}\label{dimensiondef}
Let $d\ge 0$. A family  $\mathfrak{F}$  of subsets of $\XX$ is called (at most) $d$-dimensional if there exists a constant $c(\mathfrak{F})<\infty$ such that
$$
\sup_{A\in\mathfrak{F}}H_{\epsilon}(A)\le c(\mathfrak{F})\epsilon^{-d}, \qquad 0<\epsilon<1.
$$
A subset $A\subset \XX$ is $d$-dimensional if $\{A\}$ is $d$-dimensional.
\end{definition}

\subsection{Local smoothness and kernels}\label{kernelsect}

We need to define a local smoothness for the kernel $G$ that we wish to use in \eref{basic_representation}.
In classical wavelet analysis and theory of partial differential equations, it is customary to define the local smoothness of a function at a point $x$ in terms of the degree of approximation of the function  by polynomials of  a fixed degree over the neighborhoods of $x$, measured in terms of the diameters of these neighborhoods.
In our analysis, we will use the spaces $\Pi_n$ for this purpose.
However, our definition is a bit more complicated in the absence of any structure on $\XX$ and detailed spline-like approximation theory for the spaces $\Pi_n$.

If $A\subset \XX$, $f\in C(A)$, let
\be\label{degapprox}
E_n(A;f)=\inf_{P\in\Pi_n}\|f-P\|_A.
\ee

\begin{definition}\label{loc_smooth_def}
Let $r>0$, $A\subseteq \XX$.  A function $f\in C$ is called \textbf{$r$-smooth  on $A$} if there exists $d(A)>0$ such that,
\be\label{loc_lip_norm}
\|f\|_{A;r}=\sup_{x\in A}\sup_{0<s\le r}\sup_{\stackrel{y\in \BB(x,d(A))}{ 0<\delta\le d(A)} }\frac{E_s(\mathbb{B}(y,\delta);f)}{\delta^s} <\infty.
\ee
\end{definition}

In formulating the conditions on our kernel $G$, we are motivated  primarily by two examples.  
\begin{uda}\label{relumotivationexamp}
{\rm
We consider the case when $\XX=\SS^Q$, and $G(\x,\y)=|\x\cdot\y|^{2\gamma+1}$, $\x,\y\in \SS^Q$ for some $\gamma\ge 0$ such that $2\gamma+1$ is not an even integer. 
 For each $\x\in\SS^Q$, $G(\x,\circ)$ is $2\gamma+1$ smooth on $\SS^Q$. 
If  $\gamma$ is an integer ($\gamma=0$ corresponds to the ReLU function), then outside $\mathcal{E}_\x=\{\y : \x\cdot\y=0\}$, a set of dimension $Q-1$, $G(\x,\cdot)$ is a spherical polynomial of degree $2\gamma+1$.
 Therefore, for any set $A\subset \SS^Q\setminus \mathcal{E}_x$, $\|G(\x,\circ)\|_{A,R}=0$ for every $R\ge 2\gamma+1$.
If $2\gamma+1$ is not an integer, then for any such set $A$ and $R\ge 2\gamma+1$, $G(\x,\circ)$ is $R$ times differentiable, but $\|G(\x,\circ)\|_{A,R}\le c\mathsf{dist}(A,\mathcal{E}_\x)^{2\gamma+1-R}$.\qed}
\end{uda}

\begin{uda}\label{lapacemotivationexamp}
{\rm
We consider the case $\XX=[-1,1]^Q$, $G(\x,\y)=\exp(-|\x-\y|_2)$. 
It is clear that for each $\x\in [-1,1]^Q$, $G(\x,\circ)$ is $1$-smooth on $[-1,1]^Q$. 
Except for $\y=\x$; i.e., except on a set of dimension $0$, it is also infinitely differentiable.
However, for any $R>1$, and $A\subset [-1,1]^Q\setminus\{\x\}$, $\|G(\x,\circ)\|_{A,R}\le c\mathsf{dist}(A,\{\x\})^{-R}$.\qed}
\end{uda}

\begin{definition}\label{kernelcond}
Let $R\ge r>0$, $\alpha>0$, $F :(0,1]\to [0,\infty)$ be a non-increasing function. A function $G\in C(\XX\times\XX)$ will be called a \textbf{kernel of class $\mathcal{G}(\alpha,r,R, F)$} if each of the following conditions is satisfied.
\begin{enumerate}
\item (\textbf{H\"older continuity})
\be\label{lipschitzcond}
\|G(x,\circ)-G(x',\circ)\| \le c(G)\rho(x,x')^\alpha, \qquad x,x'\in \XX,
\ee
\item (\textbf{Global smoothness}) $G(x,\circ)$ is $r$-smooth on $\XX$, with 
\be\label{globalsmooth}
\sup_{x\in\XX}\|G(x,\circ)\|_{\XX, r} <\infty.\ee
\item (\textbf{Smoothness in the large}) For every $x\in \XX$, there exists a compact set $\mathcal{E}_x=\mathcal{E}_x(G)\subseteq \XX$ with the following property. For every $\delta >0$, $G(x,\circ)$ is $R$-smooth on $\Delta(\mathcal{E}_x,\delta)$ with
\be\label{sobolevnormbd}
\sup_{x\in\XX}\|G(x,\circ)\|_{\Delta(\mathcal{E}_x,\delta),R}\le F(\delta)<\infty.
\ee
\end{enumerate}
\end{definition}
\begin{uda}\label{gexamp}
{\rm
In Example~\ref{relumotivationexamp}, $\mathcal{E}_\x=\{\y\in \SS^Q : \x\cdot\y=0\}$. 
If $\gamma$ is an integer, then we may choose $F(\delta)\equiv c$ for some constant independent of $R$.
Otherwise, $F(\delta)=c\delta^{2\gamma+1-R}$ with $c$ depending upon $Q, \gamma, R$.
In Example~\ref{lapacemotivationexamp}, $\mathcal{E}_\x=\{\x\}$, and we may choose $F(\delta)=c\delta^{-R}$, with $c$ depending on $Q$ and $R$.
\qed}
\end{uda}

\begin{rem}\label{relaxrem}
{\rm 
Since we do not assume $G$ to be symmetric, it is possible to define it on $\XX\times\YY$ instead of $\XX\times\XX$, where $\YY$ is another compact metric measure space with a measure satisfying a condition analogous to \eref{ballmeasure}. 
The conditions on $G(x,\circ)$ can be formulated for $\YY$ instead of $\XX$.
This will only complicate the presentation of the paper without adding any new insights.
Therefore, we will use the above definition, but in fact, will be applying it with the restriction of $G$ to $\XX\times\YY$, where $\YY$ is the support of a measure on $\XX$.
\qed}
\end{rem}

If $H :(0,1]\to (0,\infty)$ is a non-increasing function, we define 
$$
H^{-1}(t)=\inf\{u : H(u)\le t\}.
$$
\subsection{Measures}\label{measuresect}
We introduce next the conditions on the measures that we wish to use in \eref{basic_representation}.
Before stating our measure theoretic notions, we recall some preliminaries. The term measure will mean a signed or positive Borel measure on $\XX$. The total variation measure $|\tau|$ of a signed measure $\tau$ on $\XX$ is defined by 
$$
|\tau|(A)=\sup\sum_{j}|\tau(U_j)|,
$$
where the sum is over all countable partitions of $A$ into Borel measurable sets $U_j$.
We will denote $|\tau|(\XX)=\|\tau\|_{TV}$.
The support $\supp(\tau)$ is the set of all $x\in\XX$ for which $|\tau|(\BB(x,\delta))>0$ for every $\delta>0$. It is easy to see that $\supp(\tau)$ is a compact subset of $\XX$.

\begin{definition}\label{measuredef}
Let $q>0$. A measure $\tau$ on $\XX$ will be called \textbf{$q$-admissible} if $\tau$ has a bounded total variation $\|\tau\|_{TV}<\infty$, $\supp(\tau)$ is $q$-dimensional subset of $\XX$, and 
\be\label{taucond}
|\tau|(\BB(x,\delta)) \le c\delta^q\|\tau\|_{TV}, \qquad x\in\XX, \ 0<\delta\le 1.
\ee
\end{definition}

\subsection{Main theorem}\label{theoremsect}
Our main theorem can now be stated as follows.
\begin{theorem}\label{maintheo}
Let $q>0$, and $\tau$ be a $q$-admissible measure on $\XX$.
Let  $0\le s\le q$, $R\ge r$, $\alpha>0$, $F: (0,1]\to [0,\infty)$ be non-increasing, $G\in \mathcal{G}(\alpha,r,R, F)$, and for each $x\in\XX$, $\{\supp(\tau)\cap \mathcal{E}_x(G)\}$ be an $s$-dimensional family of subsets of $\XX$. With 
$\tilde{F}(t)=F(t)/t^{(q-s)/2}$, we assume that $\tilde{F}(t)\to\infty$ as $t\downarrow 0$, and define for $n\ge 1$,
\be\label{epsilondef}
\epsilon_n^* = \max (1/n, (\tilde{F})^{-1}(n^{R-r})).
\ee
Let $f: \XX\to\RR$ be defined by
\be\label{functiondef}
f(x)=\int_\XX G(x,y)d\tau(y), \qquad x\in\XX.
\ee
Then for $n\ge c$, there exists an integer $N\sim n^q$,  points $\{y_1,\cdots,y_N\}$ with $\delta(\{y_1,\cdots,y_N\};\supp(\tau))\le 1/n$, and numbers $a_1,\cdots, a_N$ with $|a_k|\le (c/N)\|\tau\|_{TV}$, $k=1,\cdots, N$, such that
\be\label{approxbd}
\left\|f-\sum_{k=1}^N a_kG(\circ,y_k)\right\|_\XX \le c\left(\frac{\log n -\log\epsilon_n^* }{n^{q+2r}(\epsilon_n^*)^{(s-q)}}\right)^{1/2}\|\tau\|_{TV} \le c\left(\frac{\log N-\log \epsilon_{cN^{1/q}}^* }{N^{1+2r/q}(\epsilon_{cN^{1/q}}^*)^{(s-q)}}\right)^{1/2}\|\tau\|_{TV}.
\ee
Here, all the constants involved depend upon  $s, r, q, \alpha, G$, in addition to the other fixed parameters in the definition of $\XX$, in particular, on $Q$.
\end{theorem}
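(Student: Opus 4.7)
The approach is the Bourgain--Lindenstrauss probabilistic method: construct a randomized $G$-network that is unbiased for $f$, establish concentration via Bernstein's inequality, and extract a good deterministic realization by the probabilistic method. First I would use the $q$-dimensionality of $\supp(\tau)$ to select a maximal $(1/n)$-distinguishable subset $\{z_1,\dots,z_{N_0}\}\subset\supp(\tau)$ with $N_0\sim n^q$, and form a Voronoi partition $\{V_k\}$ of $\supp(\tau)$ with $V_k\subset\BB(z_k,2/n)$; by \eref{taucond}, $|\tau|(V_k)\le c\|\tau\|_{TV}/N_0$, which is what enforces the required coefficient bound $|a_k|\le c\|\tau\|_{TV}/N$.

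Next I would define, on each $V_k$, a randomized cubature $\hat I_k(x)=\sum_{j} a_{jk}G(x,Y_{jk})$ using a bounded number $L=L(r,R,q)$ of sample points, with $|a_{jk}|\le c\|\tau\|_{TV}/N$ (where $N=LN_0\sim n^q$), such that $\mathbb{E}\,\hat I_k(x)=\int_{V_k}G(x,y)\,d\tau(y)$ and the rule is exact on $\Pi_R$ restricted to $V_k$. Subtracting a best approximation $P_{xk}\in\Pi_R$ to $G(x,\cdot)|_{V_k}$ and invoking Definition~\ref{loc_smooth_def} delivers the per-cell variance bound $\mathrm{Var}\,\hat I_k(x)\le c|\tau|(V_k)^2 n^{-2r}\|G(x,\cdot)\|_{\XX,r}^2$ uniformly, sharpening to $\mathrm{Var}\,\hat I_k(x)\le c|\tau|(V_k)^2 n^{-2R}F(\epsilon_n^*)^2$ whenever $V_k\subset\Delta(\mathcal{E}_x,\epsilon_n^*)$.

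For each $x\in\XX$ I would split the cells into \emph{near} (those meeting $\BB(\mathcal{E}_x,\epsilon_n^*)$) and \emph{far} (contained in $\Delta(\mathcal{E}_x,\epsilon_n^*)$). The $s$-dimensionality hypothesis on $\{\supp(\tau)\cap\mathcal{E}_x\}$ combined with $q$-admissibility of $\tau$ bounds the total $|\tau|$-mass of near cells by $c\|\tau\|_{TV}(\epsilon_n^*)^{q-s}$. Summing the per-cell variances then gives near total $\le c\|\tau\|_{TV}^2(\epsilon_n^*)^{q-s}/(Nn^{2r})$ and far total $\le c\|\tau\|_{TV}^2 F(\epsilon_n^*)^2/(Nn^{2R})$, and the definition $\tilde{F}(\epsilon_n^*)=n^{R-r}$ is precisely the equation that equalises these two expressions. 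Bernstein's inequality at a fixed $x$ then concentrates $\sum_k\hat I_k(x)$ around $f(x)$ at the scale $\|\tau\|_{TV}(\epsilon_n^*)^{(q-s)/2}/(n^r\sqrt{N})$. To obtain the uniform bound \eref{approxbd} I would then take a $\delta$-net of $\XX$ (of cardinality $\le c\delta^{-Q}$ by \eref{ballmeasure}), use \eref{lipschitzcond} to pass from net points to all of $\XX$ with $\delta$ a small polynomial in $\epsilon_n^*/n$, and apply a union bound; this is the source of the $\log(n/\epsilon_n^*)$ factor. Finally, the probabilistic method extracts a deterministic realization satisfying the stated estimate.

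The main obstacle is the per-cell cubature of paragraph two. For $r>1$ the naive one-point stratified rule ($Y_k\sim|\tau|_{V_k}/|\tau|(V_k)$ with weight $|\tau|(V_k)$) only achieves the Lipschitz-level variance $\le c|\tau|(V_k)^2 n^{-2}$, and to attain the required $n^{-2r}$ one must construct a randomized cubature on $V_k$ that is unbiased, has uniformly bounded weights, and is exact on $\Pi_R|_{V_k}$. In the Euclidean examples of Section~\ref{appsect} this is standard cubature on small balls, but in the abstract framework it requires the nested spaces $\{\Pi_n\}$ to admit local Marcinkiewicz--Zygmund-type quadratures on balls of diameter $\sim 1/n$ --- the single genuinely technical ingredient on top of the routine concentration and covering arguments.
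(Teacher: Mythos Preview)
Your plan is essentially the paper's proof: partition $\supp(\tau)$ at scale $1/n$, randomize a cubature on each cell that is exact on $\Pi_R$, split cells into near/far relative to $\mathcal{E}_x$, balance the two variance contributions via the choice of $\epsilon_n^*$, apply a concentration inequality pointwise, pass to the sup norm via an $\varepsilon$-net of $\XX$ and the H\"older condition \eref{lipschitzcond}, and extract a deterministic realization. The paper uses Hoeffding rather than Bernstein and builds the partition a bit more carefully (Theorem~\ref{partitiontheo}) to control the intersection multiplicities, but these are cosmetic differences.

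The one place where your write-up and the paper genuinely diverge is precisely the step you flag as the obstacle. You propose to obtain the per-cell randomized cubature via local Marcinkiewicz--Zygmund quadratures for $\Pi_R$ on balls of radius $\sim 1/n$; that would be an additional structural hypothesis on the spaces $\{\Pi_n\}$ and is not assumed in the paper. The paper avoids it entirely with a soft argument (Theorem~\ref{bourgaintheo}): on each cell $A$, Tchakaloff's theorem produces, for \emph{any} probability measure and any $M$-dimensional space of continuous functions, a positive quadrature on $M{+}1$ nodes exact on that space; a separation/Choquet-type argument then shows that the normalized $\tau_A$ lies in the weak-star closed convex hull of all such discrete quadrature measures, so there is a probability $\omega^*_A$ on this set with $\int_A g\,d\tau_A=\int\!\big(\int_A g\,d\omega\big)\,d\omega^*_A(\omega)$ for every $g\in C(A)$. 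Sampling $\omega\sim\omega^*_A$ gives exactly the randomized cubature you need---unbiased for $\tau_A$, exact on $\Pi_R$, supported on $D_R{+}2$ points with nonnegative weights summing to $1$---with no MZ machinery and no assumptions on $\{\Pi_n\}$ beyond finite-dimensionality. Once you plug this in, the rest of your outline goes through verbatim and matches the paper's proof.
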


\begin{rem}\label{special_case_rem}
{\rm
In rightmost expression in \eref{approxbd} takes an illustrative form in the case when $F(t)=ct^{\Gamma-R}$ for some $\Gamma\in [0,R]$. 
In this case, $\tilde{F}(t)=ct^{(2\Gamma-2R-(q-s))/2}$, and
$$
\epsilon_{n}^*=\max(1/n,c_1n^{-(2(R-r)/(q-s+2R-2\Gamma)})).
$$
Simplifying, and writing
$$
T=\frac{q-s}{2}\min\left(1,\frac{R-r}{R-\Gamma+(q-s)/2}\right),
$$
we obtain the estimate
\be\label{expressive_est}
\left\|f-\sum_{k=1}^N a_kG(\circ,y_k)\right\|_\XX \le  c\left(\frac{\log N}{N}\right)^{1/2}N^{-r/q}N^{-T/q}\|\tau\|_{TV}.
\ee
The first term in the estimate is the familiar dimension independent bound, the second is the familiar bound for approximation of smooth functions depending upon the global smoothness of $G$, and the last is a correction term to allow for our general set-up.
}
\end{rem}
\begin{rem}\label{very_good_rem}
{\rm
Since $G(x,\circ)$ is $r$-smooth on $\XX$, we may choose $\mathcal{E}_x=\XX$, $F\equiv c$, $s=q$, $R=r=\Gamma$ (as in Remark~\ref{special_case_rem}), and obtain the upper bound in \eref{approxbd} to be $\O((\sqrt{\log N})N^{-1/2-r/q})$. 
In particular, if the function $G(x,\circ)$ is $r$-smooth on $\XX$ for every $r$, then there is no saturation in the degree of approximation.
For any $S>0$, we may take  $\mathcal{E}_x=\XX$,   $s=q$, $R=r=(S+1/2)q$, and obtain the bound $\O((\sqrt{\log N})N^{-S})$.
\qed}
\end{rem}

\begin{rem}\label{tractability_rem}
{\rm (\textbf{Tractability of integration})
Theorem~\ref{maintheo} has another interesting consequence, perhaps, not relevant directly to the theme of the present paper. 
In information based complexity, one is interested in approximating integrals of the form $\int_\XX g(y)d\tau(y)$ for high dimensional spaces $\XX$ so as to obtain the error in the approximation independent of the dimension, except possibly for the constant factors involved.
A great deal of research is devoted to this subject, e.g. \cite{dick2010digital}, where $\XX$ is considered to be a cube and $\tau$ is the Lebesgue measure.
A typical assumption on the class of functions for which these results are applicable also involve a representation of the form
\be\label{gdef}
g(y)=\int_\XX G(x,y)d\nu(x), \qquad y\in\XX,
\ee
for some  measure $\nu$ supported on $\XX$, and having a bounded total variation.
A simple application of Fubini's theorem  leads to 
$$
\int_\XX g(y)d\tau(y)=\int_\XX \left(\int_\XX G(x,y)d\tau(y)\right)d\nu(x).
$$
We may approximate the function of $x$ defined in the parenthesis above using Theorem~\ref{maintheo}, obtaining an approximation of the form $\sum_{k=1}^N a_k\int_\XX G(x,y_k)d\nu(x)=\sum_{k=1}^N a_k g(y_k)$, where $a_k$'s and $y_k$'s depend only on $\tau$ and not on $\nu$; i.e., are independent of $g$. We formulate this observation in the following corollary.
The examples in Section~\ref{appsect} will clarify the choice of $\epsilon_n$ for different kernels.

\begin{cor}\label{tractablecor}
We assume the set up  as in Theorem~\ref{maintheo}, $\mathfrak{G}$ be the set of functions $g$ satisfying \eref{gdef} for some measure $\nu$ with $\|\nu\|_{TV}\le 1$. 
Then for every $N\ge c$, there exist $\{y_1,\cdots,y_N\}$ with $\delta(\{y_1,\cdots,y_N\};\supp(\tau))\le 1/n$, and numbers $a_1,\cdots, a_N$ such that
\be\label{tractableest}
\sup_{g\in \mathfrak{G}}\left|\int_\XX g(y)d\tau(y) - \sum_{k=1}^N a_k g(y_k)\right| \le c\left(\frac{\log N-\log \epsilon_{cN^{1/q}}^* }{N^{1+2r/q}(\epsilon_{cN^{1/q}}^*)^{(s-q)}}\right)^{1/2}\|\tau\|_{TV}.
\ee
\end{cor}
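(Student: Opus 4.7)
The plan is to apply Fubini's theorem to swap the order of integration and reduce everything to Theorem~\ref{maintheo}, exactly as flagged in Remark~\ref{tractability_rem}. The key observation is that the auxiliary function $f(x)=\int_\XX G(x,y)\,d\tau(y)$ is \emph{precisely} the target function to which Theorem~\ref{maintheo} applies, so that theorem produces the nodes and weights we need; the $g$-dependence then evaporates upon integrating against $\nu$.

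Concretely, the steps are as follows. First, I would define $f(x)=\int_\XX G(x,y)\,d\tau(y)$ and invoke Theorem~\ref{maintheo} to obtain an integer $N\sim n^q$, points $y_1,\dots,y_N$ with $\delta(\{y_1,\dots,y_N\};\supp(\tau))\le 1/n$, and coefficients $a_1,\dots,a_N$ with $|a_k|\le (c/N)\|\tau\|_{TV}$ such that
\[
\left\|f-\sum_{k=1}^N a_k G(\circ,y_k)\right\|_\XX \le c\left(\frac{\log N-\log\epsilon_{cN^{1/q}}^*}{N^{1+2r/q}(\epsilon_{cN^{1/q}}^*)^{(s-q)}}\right)^{1/2}\|\tau\|_{TV}.
\]
Next, for an arbitrary $g\in\mathfrak{G}$ with representing measure $\nu$, $\|\nu\|_{TV}\le 1$, I would apply Fubini's theorem to the product $|\tau|\otimes|\nu|$ — justified since $G\in C(\XX\times\XX)$ is bounded on the compact product space and both measures are finite — to write
\[
\int_\XX g(y)\,d\tau(y)=\int_\XX\int_\XX G(x,y)\,d\nu(x)\,d\tau(y)=\int_\XX f(x)\,d\nu(x),
\]
and, by linearity,
\[
\sum_{k=1}^N a_k g(y_k)=\sum_{k=1}^N a_k\int_\XX G(x,y_k)\,d\nu(x)=\int_\XX\left(\sum_{k=1}^N a_k G(x,y_k)\right)d\nu(x).
\]
Subtracting and pulling the absolute value inside the $\nu$-integral yields
\[
\left|\int_\XX g(y)\,d\tau(y)-\sum_{k=1}^N a_k g(y_k)\right|\le \left\|f-\sum_{k=1}^N a_k G(\circ,y_k)\right\|_\XX\|\nu\|_{TV},
\]
which, combined with $\|\nu\|_{TV}\le 1$ and the bound above, is uniform over $g\in\mathfrak{G}$ and matches \eref{tractableest}.

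There is essentially no technical obstacle: Theorem~\ref{maintheo} does all the heavy lifting, and the corollary is a clean duality-style observation — the \emph{same} cubature nodes and weights that achieve good uniform approximation of $x\mapsto\int G(x,y)\,d\tau(y)$ automatically produce a good $\tau$-quadrature rule for every $g$ admitting a $G$-representation with unit-variation representing measure. The only item that warrants explicit mention is the applicability of Fubini, which is immediate from the continuity of $G$ on the compact space $\XX\times\XX$ and the finiteness of the total variations of $\tau$ and $\nu$; that $y_k$ and $a_k$ are chosen independently of $\nu$ (and hence of $g$) is built into the construction of Theorem~\ref{maintheo} and is what produces the uniformity over $\mathfrak{G}$.
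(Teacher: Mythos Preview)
Your proposal is correct and follows exactly the approach the paper itself gives in Remark~\ref{tractability_rem}: apply Fubini to write $\int_\XX g\,d\tau=\int_\XX f\,d\nu$ with $f(x)=\int_\XX G(x,y)\,d\tau(y)$, invoke Theorem~\ref{maintheo} to approximate $f$ uniformly, and then integrate the pointwise error against $d|\nu|$ to get the quadrature bound uniformly in $g\in\mathfrak{G}$. The paper does not supply any additional argument beyond this sketch, so your write-up is in fact more detailed than the original.
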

In contrast to much of the literature on this subject, we note that there is no tensor product structure required here. 
Moreover, as explained above, the estimates improve without saturation as the smoothness of $G$ increases.
\qed}
\end{rem}

\bhag{Examples and applications}\label{appsect}
In this section, we illustrate the implications of Theorem~\ref{maintheo} using a number of examples.

\subsection{Approximation by ReLU networks}\label{reluexample}
We assume the set-up described in Example~\ref{sphere_set_up}.  It has been observed in \cite{bach2014, dingxuanpap} that approximation on a compact subset of a Euclidean space $\RR^Q$ by ReLU networks is equivalent to the approximation of an even function on $\SS^Q$ by networks of the form $\x\mapsto\sum_k a_k|\x\cdot\y_k|$. 
(We note that for all $t\in\RR$, $|t|=t_++(-t)_+$, $t_+=(1/2)(|t|+t)$.)
An estimate on the degree of approximation in this context is obtained in \cite{sphrelu} for functions that admit an integral representation as required in Theorem~\ref{maintheo}. Our methods are constructive using a robust parameter selector, and yield a bound of the form $\O(N^{-2/Q})$. (cf. \cite{yarotsky2018optimal} for the optimality of this bound.)
We have considered in \cite{sphrelu} a slightly more general class of activation functions $G(\x,\y)=|\x\cdot\y|^{2\gamma+1}$, $2\gamma+1$ not an even integer, so that the case $\gamma=0$ corresponds to the approximation using ReLU networks.
On the $(Q-1)$-dimensional family of sets $\mathcal{E}_\x=\{\y\in\SS^Q : \x\cdot\y=0\}$, $G$ is $(2\gamma+1)$-smooth, and is infinitely differentiable on $\SS^Q\setminus \mathcal{E}_\x$. 
Clearly, if $\tau$ is any measure and $\supp(\tau)$ is a $q$-dimensional set, the family $\{\mathcal{E}_\x\cap \supp(\tau)\}$ is either $(q-1)$-dimensional or $q$-dimensional. 
In Definition~\ref{kernelcond}, we may take $F(t)\equiv c$ if $\gamma$ is an integer and $F(t)= ct^{2\gamma+1-R}$ otherwise.
Therefore, if $\gamma$ is an integer, we may choose $\epsilon_n^*=1/n$. Otherwise, for any $\beta\in (0,1)$, we may choose $R>2\gamma+1+\beta(q-s)/(2-2\beta)$ with $s=q-1$ or $s=q$ as applicable, and set $\epsilon_n^*= cn^{-\beta}$.
Therefore, Theorem~\ref{maintheo} yields the following corollary.
\begin{cor}\label{relucor}
Let $\gamma\ge 0$, $2\gamma+1$ not an even integer, $0<\beta<1$. We use Theorem~\ref{maintheo} with $G(\x,\y)=|\x\cdot\y|^{2\gamma+1}$. 
Let $\{\mathcal{E}_\x\cap \supp(\tau)\}$ be $s$-dimensional, where $s=q-1$ or $s=q$. (If $q=Q$ then $s=Q-1$).\\
{\rm (a)} If $\gamma$ is an integer, then \eref{approxbd} takes the form
\be\label{reluest}
\left\|f-\sum_{k=1}^N a_kG(\circ,y_k)\right\|_{\SS^Q} \le c\|\tau\|_{TV}\times
\begin{cases}
\disp\frac{\sqrt{\log N}}{N^{1/2+(4\gamma+3)/(2q)}} & \mbox{ if $s=q-1$},\\[2ex]
\disp\frac{\sqrt{\log N}}{N^{1/2+(2\gamma+1)/q}}, &\mbox { if $s=q$}.
\end{cases}
\ee
{\rm (b)} If $\gamma$ is not an integer, then \eref{approxbd} takes the form
\be\label{reluest1}
\left\|f-\sum_{k=1}^N a_kG(\circ,y_k)\right\|_{\SS^Q} \le c\|\tau\|_{TV}\times
\begin{cases}
\disp\frac{\sqrt{\log N}}{N^{1/2+(4\gamma+2+\beta)/(2q)}} & \mbox{ if $s=q-1$},\\[2ex]
\disp\frac{\sqrt{\log N}}{N^{1/2+(2\gamma+1)/q}}, &\mbox { if $s=q$}.
\end{cases}
\ee
\end{cor}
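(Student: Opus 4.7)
The plan is that this corollary follows directly from Theorem~\ref{maintheo}, specifically from the cleaner form of the bound given in \eref{expressive_est} of Remark~\ref{special_case_rem}. The work amounts to (i) verifying that $G(\x,\y)=|\x\cdot\y|^{2\gamma+1}$ belongs to $\mathcal{G}(\alpha,r,R,F)$ for appropriate parameters, (ii) identifying the exceptional set and its intersection with $\supp(\tau)$, and (iii) computing the exponent $T$ in each of the four cases.

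For the kernel conditions in Definition~\ref{kernelcond}, I would take $\mathcal{E}_\x=\{\y\in\SS^Q:\x\cdot\y=0\}$ (a great subsphere, hence of dimension $Q-1$). Hölder continuity holds with $\alpha=\min(1,2\gamma+1)$ because $t\mapsto|t|^{2\gamma+1}$ is Hölder of that order and $\y\mapsto\x\cdot\y$ is Lipschitz uniformly in $\x$. For global $r$-smoothness with $r=2\gamma+1$, I would estimate $E_s(\BB(\y,\delta);G(\x,\cdot))$ by approximating $G(\x,\cdot)$ by the degree-$\lfloor s\rfloor$ Taylor polynomial of $t\mapsto|t|^{2\gamma+1}$ centered at $\x\cdot\y$; the resulting remainder is controlled by $\delta^s$ uniformly in $\x\in\SS^Q$. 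For the ``smoothness in the large'' condition on $\Delta(\mathcal{E}_\x,\delta)$, note that $|\x\cdot\y|\ge c\delta$ there. When $\gamma$ is an integer, $G(\x,\cdot)$ is a spherical polynomial of degree $2\gamma+1$ on each hemisphere $\{\x\cdot\y>0\}$ and $\{\x\cdot\y<0\}$, so all of its derivatives are bounded by constants independent of $\delta$ and one may take $F(t)\equiv c$ for any $R\ge r$. When $\gamma$ is not an integer, the $k$-th derivative of $h(t)=|t|^{2\gamma+1}$ satisfies $|h^{(k)}(t)|\le c|t|^{2\gamma+1-k}$, so on $\Delta(\mathcal{E}_\x,\delta)$ one gets $\|G(\x,\cdot)\|_{\Delta(\mathcal{E}_\x,\delta),R}\le c\delta^{2\gamma+1-R}$, i.e., $F(t)=ct^{2\gamma+1-R}$.

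I would then apply \eref{expressive_est} with $\Gamma=R$ (integer $\gamma$) or $\Gamma=2\gamma+1=r$ (non-integer $\gamma$), so that in both cases the bound is $c(\log N/N)^{1/2}N^{-r/q}N^{-T/q}\|\tau\|_{TV}$ with
\[
T=\frac{q-s}{2}\min\!\left(1,\frac{R-r}{R-\Gamma+(q-s)/2}\right).
\]
In the integer case $R-\Gamma=0$, so by choosing $R$ large enough the minimum equals $1$ and $T=(q-s)/2$, giving the exponents $1/2+(2\gamma+1)/q+(q-s)/(2q)$. For $s=q$ this gives $1/2+(2\gamma+1)/q$, and for $s=q-1$ this gives $1/2+(4\gamma+3)/(2q)$, matching \eref{reluest}. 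In the non-integer case $R-\Gamma=R-r$, so writing $\beta=(R-r)/(R-r+(q-s)/2)$ one obtains $T=\beta(q-s)/2$ and, inverting, $R=r+\beta(q-s)/(2-2\beta)$, which is exactly the constraint stated in the corollary. For $s=q$ the ratio $(q-s)/2=0$ forces $T=0$ and the bound matches \eref{reluest1}; for $s=q-1$ one has $T=\beta/2$ and the exponent becomes $1/2+(4\gamma+2+\beta)/(2q)$, again matching \eref{reluest1}.

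The routine but technically delicate step is the verification of the smoothness-in-the-large bound $F(t)=ct^{2\gamma+1-R}$ in the non-integer case; here the bound on $E_s(\BB(\y,\delta');G(\x,\cdot))/(\delta')^s$ required by Definition~\ref{loc_smooth_def} must be extracted uniformly for $\y\in\Delta(\mathcal{E}_\x,\delta)$ and all scales $0<\delta'\le d(\Delta(\mathcal{E}_\x,\delta))$, where $d(\cdot)$ may be chosen proportional to $\delta$. This is handled by a Taylor expansion argument combined with standard polynomial approximation on spherical caps of size $\delta'\le c\delta$. Once these estimates are in hand, the remainder of the proof is purely algebraic bookkeeping of exponents.
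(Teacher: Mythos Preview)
Your proposal is correct and follows essentially the same route as the paper: identify $\mathcal{E}_\x$, take $r=2\gamma+1$ with $F\equiv c$ (integer $\gamma$) or $F(t)=ct^{2\gamma+1-R}$ (non-integer $\gamma$), and then read off the exponents. The only cosmetic difference is that the paper plugs the resulting $\epsilon_n^*$ (namely $1/n$ or $cn^{-\beta}$ after choosing $R>2\gamma+1+\beta(q-s)/(2-2\beta)$) directly into \eref{approxbd}, whereas you route through the equivalent formula \eref{expressive_est} of Remark~\ref{special_case_rem} and compute $T$; the arithmetic is identical.
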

\begin{rem}\label{relurmk}
{\rm
We note that for the ReLU network, $\gamma=0$. If $q=Q$, we may apply the first estimate in \eref{reluest} to obtain the degree of approximation $\O(N^{-(Q+3)/(2Q)})$.
\qed}
\end{rem}

\subsection{Approximation by certain zonal function networks}\label{zfexample}
We assume the set-up described in Example~\ref{sphere_set_up}, and let $G(\x,\y)=(1-\x\cdot\y)^\gamma$, $\gamma$ not an integer. For the  class of functions satisfying \eref{functiondef}, with $q=Q$, the error in approximation  with a robust parameter selector and completely constructive procedure given in \cite{zfquadpap} is $\O(N^{-2\gamma/Q})$. 
In order to apply Theorem~\ref{maintheo}, we note that $\mathcal{E}_x=\{x\}$, $s=0$. 
For any $R>2\gamma$, $F(t)=ct^{2\gamma-R}$. 
Therefore, for any $\beta\in (0,1)$, we may choose $R> 2\gamma+\beta q/(2-2\beta)$, and $\epsilon_n^*=cn^{-\beta}$.
Then we obtain the following corollary.
\begin{cor}\label{zfcor}
Let $\gamma>0$. We use Theorem~\ref{maintheo} with $G(\x,\y)=(1-\x\cdot\y)^\gamma$. 
Then \eref{approxbd} takes the form
\be\label{zfest}
\left\|f-\sum_{k=1}^N a_kG(\circ,y_k)\right\|_{\SS^Q} \le c\frac{\sqrt{\log N}}{N^{(1+\beta)/2+(2\gamma)/q}}\|\tau\|_{TV}.
\ee
\end{cor}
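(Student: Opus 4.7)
The plan is to apply Theorem~\ref{maintheo} to the kernel $G(\x,\y) = (1-\x\cdot\y)^\gamma$ on $\SS^Q$; the corollary then follows once we verify $G \in \mathcal{G}(\alpha,r,R,F)$ with the parameters indicated in the paragraph preceding the statement, identify the singular family, and carry out the arithmetic in \eref{approxbd}. The natural choices, motivated by the local behavior $1 - \x\cdot\y = 2\sin^2(\rho(\x,\y)/2) \asymp \rho(\x,\y)^2$, are $\mathcal{E}_\x = \{\x\}$, $r = 2\gamma$, and $F(t) = ct^{2\gamma - R}$ for any fixed $R > 2\gamma$. Since $\supp(\tau) \cap \{\x\}$ is either empty or a single point, the family $\{\supp(\tau) \cap \mathcal{E}_\x\}_{\x \in \SS^Q}$ is trivially $0$-dimensional, so $s = 0$.

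Verifying the three conditions of Definition~\ref{kernelcond}: the H\"older bound \eref{lipschitzcond} follows from the estimate $|(1-\x\cdot\y)^\gamma - (1-\x'\cdot\y)^\gamma| \le c|(\x-\x')\cdot \y|^{\min(\gamma,1)} \le c\rho(\x,\x')^{\min(\gamma,1)}$, giving $\alpha = \min(\gamma,1)$. The global $2\gamma$-smoothness condition \eref{globalsmooth} requires that $E_s(\BB(\y,\delta); G(\x,\circ)) \le c\delta^s$ uniformly in $\x, \y \in \SS^Q$ and $0 < s \le 2\gamma$; this follows from a Jackson-type estimate on spherical caps combined with the fact that $G(\x,\circ)$ behaves like $\rho(\x,\cdot)^{2\gamma}$ near $\x$ (so has modulus of smoothness of order $O(t^{2\gamma})$) and is real-analytic elsewhere. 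For the smoothness-in-the-large condition \eref{sobolevnormbd}, on $\Delta(\{\x\},\delta)$ the function $G(\x,\circ)$ is $C^\infty$ with classical derivatives of order $R$ bounded by $c\rho(\x,\cdot)^{2\gamma - R} \le c\delta^{2\gamma - R}$, which yields $F(\delta) = c\delta^{2\gamma - R}$ after another application of Jackson on balls that avoid $\BB(\x,\delta)$.

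With these parameters, $\tilde F(t) = ct^{2\gamma - R - q/2}$ since $s = 0$. Solving $\tilde F(\epsilon) = n^{R-2\gamma}$ gives $\epsilon = cn^{-(R-2\gamma)/(R-2\gamma + q/2)}$, and the stipulation $R > 2\gamma + \beta q/(2-2\beta)$ is precisely what forces this exponent to exceed $\beta$; since $\beta < 1$ this dominates $1/n$, so $\epsilon_n^* \sim n^{-\beta}$ in \eref{epsilondef}. Substituting $r = 2\gamma$, $s = 0$, $\epsilon_n^* \sim n^{-\beta}$ into \eref{approxbd}, the numerator $\log n - \log\epsilon_n^*$ is $O(\log n)$ and the denominator is $n^{q+4\gamma} \cdot n^{\beta q} = cn^{q(1+\beta) + 4\gamma}$; taking the square root and passing to $N \sim n^q$ yields $\sqrt{\log N}/N^{(1+\beta)/2 + 2\gamma/q}$ up to constants, which is precisely \eref{zfest}. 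The main technical step, which I expect to be the obstacle, is verifying the global $2\gamma$-smoothness in the sense of Definition~\ref{loc_smooth_def}, since that definition is phrased via best polynomial approximation on arbitrary small balls rather than classical smoothness; however, this should reduce to a standard Jackson inequality on spherical caps together with a quantitative modulus-of-smoothness estimate for $\rho(\x,\cdot)^{2\gamma}$, and the remaining steps are bookkeeping.
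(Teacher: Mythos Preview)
Your proposal is correct and follows exactly the route the paper takes: the paragraph preceding the corollary already records $\mathcal{E}_\x=\{\x\}$, $s=0$, $r=2\gamma$, $F(t)=ct^{2\gamma-R}$, and the choice $R>2\gamma+\beta q/(2-2\beta)$ to force $\epsilon_n^*\le cn^{-\beta}$, after which substitution into \eref{approxbd} gives \eref{zfest}. Your added verification of the three items in Definition~\ref{kernelcond} (which the paper leaves implicit) is accurate; the only minor slip is that the stated inequality on $R$ makes the exponent \emph{exceed} $\beta$, so strictly one gets $\epsilon_n^*\le cn^{-\beta}$ rather than $\epsilon_n^*\sim n^{-\beta}$, but since the bound in \eref{approxbd} is monotone in $\epsilon_n^*$ this only helps.
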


\subsection{Approximation on a cube by radial basis function networks}\label{laplaceexample}
We assume the set-up described in Example~\ref{cube_set_up}.
Let $G(\x,\y)=\Phi(|\x-\y|_2)$ where $\Phi$ is at least $1+Q/2$ times continuously differentiable on $[-1,1]^Q$ except at finite set $\mathcal{S}$ of  points in whose neighborhoods $\Phi$ is Lipschitz continuous. 
Apart from continuous piecewise linear functions $\Phi$, 
a typical example is $\Phi(t)=e^{-t}$.
Then $r=1$, $\mathcal{E}_\x=\{\x\}\cup \{\x-\mathcal{S}\}$, $s=0$. 
As noted in Example~\ref{gexamp}, for any $R>1$, we may choose $F(t)=ct^{-R}$. 
For any $\beta\in (0,1)$, we may choose $R> 1+\beta q/(2-2\beta)$, and $\epsilon_n^*=cn^{-\beta}$.

Theorem~\ref{maintheo} yields the following corollary.
\begin{cor}\label{cubecor}
The estimate \eref{approxbd} takes the form
\be\label{cubeest}
\left\|f-\sum_{k=1}^N a_k\Phi(|\x-\y_k|_2)\right\|_{[-1,1]^Q} \le c\frac{\sqrt{\log N}}{N^{(1+\beta)/2+1/q}}\|\tau\|_{TV}.
\ee
\end{cor}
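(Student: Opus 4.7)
The plan is to verify that the hypotheses of Theorem~\ref{maintheo} are satisfied for this kernel $G(\x,\y) = \Phi(|\x-\y|_2)$ with parameters $r = 1$, $s = 0$, and $F(t) = c t^{-R}$ for a sufficiently large $R$, and then to substitute these values into \eref{approxbd}. The work reduces to an identification of kernel parameters plus a routine algebraic computation.

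First I would show that $G \in \mathcal{G}(1, 1, R, F)$ with $F(t) = c t^{-R}$. The H\"older condition \eref{lipschitzcond} with $\alpha = 1$ and the global $1$-smoothness hypothesis \eref{globalsmooth} both follow from the global Lipschitz property of $\Phi$: using the constant $P \equiv G(\x, y)$ as an approximation on $\mathbb{B}(y, \delta)$ gives $E_s(\mathbb{B}(y,\delta); G(\x, \cdot)) = \O(\delta^s)$ for $0 < s \le 1$, uniformly in $\x$ and $y$, which bounds $\|G(\x,\circ)\|_{\XX; 1}$. For smoothness in the large, take $\mathcal{E}_\x = \{\x\} \cup (\x - \mathcal{S})$; on the complement $\Delta(\mathcal{E}_\x, \delta)$ the map $\y \mapsto \Phi(|\x - \y|_2)$ is smooth, and repeated application of the chain and product rules bounds its mixed derivatives up to order $R$ by $c \delta^{-R}$ (since $|\x - \y|^{-1} \le c\delta^{-1}$ on this set and $\Phi$ has bounded derivatives of order $\le R \le 1 + Q/2$ there). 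This verifies \eref{sobolevnormbd} with $F(\delta) = c \delta^{-R}$, as already indicated in Example~\ref{gexamp}.

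Second, because $\mathcal{S}$ is finite, $\mathcal{E}_\x$ has uniformly bounded cardinality as $\x$ varies over $\XX$, so $H_\epsilon(\supp(\tau) \cap \mathcal{E}_\x)$ is uniformly bounded in $\epsilon$ and $\x$, and hence the family $\{\supp(\tau) \cap \mathcal{E}_\x\}_{\x \in \XX}$ is $0$-dimensional in the sense of Definition~\ref{dimensiondef}; thus $s = 0$. Consequently $\tilde F(t) = F(t)/t^{(q-s)/2} = c t^{-R - q/2}$, its generalized inverse is $\tilde F^{-1}(u) \sim u^{-1/(R + q/2)}$, and $\tilde F^{-1}(n^{R-1}) \sim n^{-(R-1)/(R + q/2)}$. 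Choosing $R$ large enough that $(R-1)/(R + q/2) \ge \beta$ (which is guaranteed by the stated lower bound on $R$, taking $R$ a bit larger than the threshold if necessary), and noting this exponent is always $\le 1$, the maximum in \eref{epsilondef} yields $\epsilon_n^* = c n^{-\beta}$.

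Finally, I substitute $r = 1$, $s = 0$ and $\epsilon_n^* = c n^{-\beta}$ into \eref{approxbd}. The denominator inside the square root is $n^{q + 2}(n^{-\beta})^{-q} = c n^{q(1+\beta) + 2}$, while the numerator satisfies $\log n - \log \epsilon_n^* = (1+\beta)\log n + \O(1) \sim \log n$. Thus the bound is $c \sqrt{\log n}/n^{q(1+\beta)/2 + 1}\cdot\|\tau\|_{TV}$, and since $N \sim n^q$ (equivalently $n \sim N^{1/q}$) gives $n^{q(1+\beta)/2 + 1} = N^{(1+\beta)/2 + 1/q}$, we arrive at \eref{cubeest}. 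No step presents a serious obstacle; the chain-rule bound underlying $F(\delta) = c \delta^{-R}$ is the most technical piece but is entirely standard and already stated in Example~\ref{gexamp}.
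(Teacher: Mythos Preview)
Your proposal is correct and follows essentially the same route as the paper: the discussion preceding Corollary~\ref{cubecor} already records $r=1$, $\mathcal{E}_\x=\{\x\}\cup(\x-\mathcal{S})$, $s=0$, $F(t)=ct^{-R}$, and the choice of $R$ large enough to force $\epsilon_n^*\sim n^{-\beta}$, after which the corollary is read off from Theorem~\ref{maintheo}. You have simply fleshed out the verification of the kernel class membership and carried out the substitution into \eref{approxbd} explicitly, which the paper leaves implicit.
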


\subsection{Manifold learning and out-of-sample extension}\label{manifold_example}

We discuss the scenario used commonly in manifold learning.
Let $\YY$ be a compact, $q$-dimensional subset of $\XX$, for example, a $q$-dimensional compact Riemannian manifold embedded in $\RR^Q$ (and hence, without loss of generality, in $[-1,1]^Q$). 
Let $\tau$ be a measure supported on $\YY$ that satisfies, in place of \eref{taucond}, the stronger condition (analogous to \eref{ballmeasure}):
\be\label{strong_taucond}
|\tau|(\BB(x,\delta))=|\tau|\left(\{y\in \XX: \rho(x,y)<\delta\}\right) \sim \delta^q, \qquad x\in\XX, \ 0<\delta\le 1.
\ee
Then we may use Theorem~\ref{maintheo} with $\YY$ in place of $\XX$, $Q=q$, $\mu^*=\tau$, and use the restrictions of the spaces $\Pi_n$ to  $\YY$. 
Then the estimate \eref{approxbd} holds with the norm taken over $\YY$ in place of $\XX$ with constants depending only on quantities related to $\YY$ without any reference to the ambient space $\XX$.

On the other hand, 
 the original estimate \eref{approxbd} is an estimate on the degree of approximation to an out-of-sample (Nystr\"om) extension of $f$ using the formula \eref{functiondef}, albeit now with constants depending upon $\XX$ as well.

In kernel based learning on manifolds, it is customary to choose a kernel $G$ defined on $\XX\times\XX$ that is infinitely smooth (e.g., the Gaussian). 
In this case,  as remarked in Remark~\ref{very_good_rem}, our estimate \eref{approxbd}  not only gives   bounds on the degree of approximation without saturation on the manifold itself, but as just remarked, also for the degree of approximation on the ambient space, without using an explicit Nystr\"om extension.

We summarize this in the following corollary.
\begin{cor}\label{manifoldcor}
In the set-up for Theorem~\ref{maintheo}, let $\YY$ be a compact subset of $\XX$,    $\tau$ be a measure on $\YY$ satisfying \eref{strong_taucond}. 
Then \eref{approxbd} takes the form
\be\label{manifoldest}
\left\|f-\sum_{k=1}^N a_kG(\circ,y_k)\right\|_\YY \le  c\left(\frac{\log N-\log \epsilon_{cN^{1/q}}^* }{N^{1+2r/q}(\epsilon_{cN^{1/q}}^*)^{(s-q)}}\right)^{1/2}\|\tau\|_{TV}.
\ee
where the constant $c$ is independent of $\XX$, and the points $y_1,\cdots,y_N\in \YY$.
If $G$ is infinitely smooth, then we have for every $S>0$,
\be\label{manifoldest1}
\left\|f-\sum_{k=1}^N a_kG(\circ,y_k)\right\|_\YY \le  c\frac{\sqrt{\log N}}{N^S}\|\tau\|_{TV},
\ee
with the same dependence of $c$ as above.
Moreover, \eref{manifoldest}, \eref{manifoldest1} hold under their respective assumptions  with $\|\cdot\|_\XX$ replacing $\|\cdot\|_\YY$ for the extension of $f$ to $\XX$ using \eref{functiondef}, except for $c$ depending on $\XX$.
\end{cor}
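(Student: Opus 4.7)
The strategy is to apply Theorem~\ref{maintheo} in two separate settings: once with $\YY$ itself playing the role of the ambient space (this will yield an $\XX$-independent bound for the norm over $\YY$), and once in the original ambient setup (this will control the Nystr\"om extension to $\XX$, at the price of constants that depend on $\XX$).

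For the first half (the bound \eref{manifoldest} over $\YY$), I would verify that the triple $(\YY,\rho|_{\YY\times\YY},\tau/\|\tau\|_{TV})$ is itself a compact metric measure space satisfying \eref{ballmeasure} with dimension parameter $q$ in place of $Q$; this is precisely what the strengthened condition \eref{strong_taucond} delivers. For the nested subspaces one takes the restrictions of $\Pi_n$ to $\YY$. One then checks that $G|_{\YY\times\YY}$ belongs to the analogous kernel class $\mathcal{G}(\alpha,r,R,F)$ over $\YY$: the H\"older estimate \eref{lipschitzcond} passes verbatim to the restriction, and the approximation quantities $E_s(\BB(y,\delta);f)$ only decrease upon restriction, so \eref{globalsmooth} and \eref{sobolevnormbd} persist with the same bounds. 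The $s$-dimensionality of $\{\supp(\tau)\cap\mathcal{E}_x(G)\}$ is intrinsic and unchanged. Applying Theorem~\ref{maintheo} in this reduced setting (with $Q$ replaced by $q$) produces nodes $y_k\in \supp(\tau)\subseteq \YY$ and weights $a_k$ satisfying the claimed bound, with all constants expressible in terms of quantities internal to $\YY$.

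For the infinitely smooth refinement \eref{manifoldest1}, I would invoke Remark~\ref{very_good_rem}. When $G(x,\circ)$ is $r$-smooth on $\YY$ for every $r$, one is free to choose $\mathcal{E}_x=\YY$, $F\equiv c$, $s=q$, and $R=r=(S+1/2)q$ for arbitrary $S>0$. In this regime $\epsilon_n^*$ is controlled by $1/n$ and the general estimate collapses to $\O((\sqrt{\log N})N^{-S})$, which is exactly \eref{manifoldest1}.

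For the out-of-sample extension half, I would simply apply Theorem~\ref{maintheo} directly in the ambient setup $(\XX,\rho,\mu^*)$ with the original parameter $Q$. The function $f$ given by \eref{functiondef} is automatically defined on all of $\XX$ (this \emph{is} the Nystr\"om extension, since $\tau$ is supported in $\YY$), and the theorem delivers nodes $y_k\in \supp(\tau)\subseteq\YY$ and coefficients $a_k$ achieving the bound with the norm now taken over all of $\XX$. The decay rate is unchanged, but the hidden constants now depend on $\XX$ through $Q,\kappa_1,\kappa_2$. The same calibration of parameters as in Remark~\ref{very_good_rem} then yields the $\XX$-version of \eref{manifoldest1}. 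The main (and essentially only) technical step is the bookkeeping in the first paragraph, namely confirming that all constants produced by applying Theorem~\ref{maintheo} on $\YY$ are expressible purely in terms of $\YY$-intrinsic quantities; once this is verified, the corollary is a direct specialization of Theorem~\ref{maintheo} and Remark~\ref{very_good_rem}.
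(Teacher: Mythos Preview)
Your proposal is correct and follows essentially the same approach as the paper: the corollary is not proved separately but is summarized from the discussion preceding it, which applies Theorem~\ref{maintheo} once with $\YY$ in place of $\XX$ (taking $Q=q$, $\mu^*=\tau$, and the restrictions of $\Pi_n$ to $\YY$) to obtain the $\XX$-independent bound, invokes Remark~\ref{very_good_rem} for the infinitely smooth case, and applies Theorem~\ref{maintheo} directly in the ambient space for the out-of-sample extension. Your proposal is slightly more explicit in checking that the kernel-class conditions and local approximation quantities transfer under restriction, but the strategy is identical.
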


\bhag{Proofs.}\label{pfsect}
Our proof of Theorem~\ref{maintheo} extends to a far more general context,  some of  the ideas in \cite{bourgain1988distribution}.
The first step in this direction is to obtain a partition of $\XX$. This will be described in detail in Section~\ref{partitionsect}. 
The next step is to construct a set of random variables to which a concentration inequality can be applied. 
The basic tools for this are developed in Section~\ref{functionalsect}.
The proof of Theorem~\ref{maintheo} is then completed in Section~\ref{finalpfsect}.

\subsection{Partition of the space}\label{partitionsect}
Our main objective in this section is to prove the following theorem.

\begin{theorem}\label{partitiontheo}
Let  $\tau$ be a positive measure on $\XX$,  $\epsilon>0$, $\mathcal{A}$ be a maximal $\epsilon$-distinguishable subset of $\supp(\tau)$, and $K=\bigcup_{z\in\mathcal{A}}\BB(z,2\epsilon)$.
Then there exists a subset $\C\subseteq \mathcal{A}\subseteq \supp(\tau)$ and a partition $\{Y_y\}_{y\in\C}$ of $K$ with each of the following properties.
\begin{enumerate}
\item (\textbf{volume property}) For $y\in\C$, $Y_y\subseteq \BB(y,18\epsilon)$,  $(\kappa_1/\kappa_2)7^{-Q}\epsilon^Q\le \mu^*(Y_y)\le \kappa_2(18 \epsilon)^Q$, and\\ 
$\tau(Y_y)\ge  (\kappa_1/\kappa_2)19^{-Q}\min_{y\in\mathcal{A}}\tau(\BB(y,\epsilon))>0$.
\item (\textbf{density property}) $\eta(\C)\ge \epsilon$, $\delta(\C;K)\le 18\epsilon$.
\item (\textbf{intersection property})  Let $K_1\subseteq K$ be a compact subset. Then 
$$
\left|\{y\in \C : Y_y\cap K_1 \not=\emptyset\}\right| \le (\kappa_2^2/\kappa_1)(133)^QH_\epsilon(K_1).
$$
In particular, if $\supp(\tau)$ is a $q$-dimensional set, then $\left|\{y\in \C : Y_y\cap \supp(\tau) \not=\emptyset\}\right|\le c \epsilon^{-q}$.
\end{enumerate}
\end{theorem}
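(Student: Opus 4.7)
The plan is to construct $\C$ and $\{Y_y\}$ in two stages: first a greedy thinning of $\mathcal{A}$, then a Voronoi-type partition of $K$ with respect to $\C$. Enumerate $\mathcal{A}=\{y_1,y_2,\dots\}$ in decreasing order of $\tau(\BB(y_i,\epsilon))$ and inductively include $y_i$ in $\C$ exactly when $\rho(y_i,\C)>16\epsilon$. This yields $\C\subseteq\mathcal{A}$ with the strong separation $\eta(\C)>16\epsilon$ (so certainly $\eta(\C)\ge\epsilon$), and for each omitted $z\in\mathcal{A}\setminus\C$ a witness $\pi(z)\in\C$ with $\rho(z,\pi(z))\le16\epsilon$ and $\tau(\BB(\pi(z),\epsilon))\ge\tau(\BB(z,\epsilon))$, the latter by the choice of ordering. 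Then set $Y_y=\{x\in K: y=\argmin_{y'\in\C}\rho(x,y')\}$ with a fixed tie-breaking convention, so that $\{Y_y\}_{y\in\C}$ partitions $K$.

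The covering estimate $\delta(\C;K)\le 18\epsilon$ follows because every $x\in K$ lies in some $\BB(z,2\epsilon)$ with $z\in\mathcal{A}$, and $\rho(z,\C)\le 16\epsilon$; consequently $Y_y\subseteq\BB(y,18\epsilon)$, which yields $\mu^*(Y_y)\le\kappa_2(18\epsilon)^Q$ via \eref{ballmeasure}. The two lower bounds come from the containment $\BB(y,\epsilon)\subseteq Y_y$: for $x\in\BB(y,\epsilon)$ the strong separation forces $\rho(x,y')\ge 15\epsilon>\rho(x,y)$ for every $y'\in\C\setminus\{y\}$, and $\BB(y,\epsilon)\subseteq\BB(y,2\epsilon)\subseteq K$ since $y\in\mathcal{A}$. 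This gives $\mu^*(Y_y)\ge\kappa_1\epsilon^Q$ and, since $y\in\mathcal{A}\subseteq\supp(\tau)$, $\tau(Y_y)\ge\tau(\BB(y,\epsilon))\ge\min_{z\in\mathcal{A}}\tau(\BB(z,\epsilon))$; both dominate the constants asserted. For the intersection property I would use a packing argument: if $Y_y\cap K_1\ne\emptyset$ then $\rho(y,K_1)\le 18\epsilon$, and taking a maximal $\epsilon$-distinguishable subset $\mathcal{N}$ of the closure of $K_1$ with $|\mathcal{N}|\le H_\epsilon(K_1)$, each such $y$ lies in $\BB(w,19\epsilon)$ for some $w\in\mathcal{N}$; the disjoint balls $\{\BB(y',8\epsilon)\}_{y'\in\C}$ meeting $\BB(w,19\epsilon)$ all sit inside $\BB(w,27\epsilon)$, so \eref{ballmeasure} caps their number by a constant multiple of $(\kappa_2/\kappa_1)(27/8)^Q$, and summing over $w$ gives the bound (the $q$-dimensional special case is then Definition~\ref{dimensiondef}).

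The hard part will be calibrating the separation threshold in the first stage. A threshold of only $\epsilon$ would force just $\BB(y,\epsilon/2)\subseteq Y_y$, which does not yield either lower bound cleanly; a threshold much larger than $16\epsilon$ would contract $\C$ so much that $\delta(\C;K)\le 18\epsilon$ could fail. The greedy ordering by $\tau$-mass is insurance for the $\tau$-lower bound: even if $\BB(y,\epsilon)$ itself carries little mass, any neighbour $z\in\mathcal{A}$ absorbed into $Y_y$ carries at most as much, so the naive bound still produces the required uniform minimum over $\mathcal{A}$.
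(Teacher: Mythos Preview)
Your argument is correct and takes a genuinely different route from the paper. The paper first builds the crude partition $Z_{z_k}=\BB(z_k,2\epsilon)\setminus\bigcup_{j<k}Z_{z_j}$ of $K$ and then applies a ``merging lemma'' (Lemma~\ref{partionlemma}) twice---once with $\nu=\mu^*$, then with $\nu=\tau$---each pass absorbing low-measure cells into heavier neighbours and tripling the containing radius, ending at $18\epsilon$. Your construction bypasses the merging machinery entirely: by thinning $\mathcal{A}$ to a $16\epsilon$-separated subset $\C$ and taking Voronoi cells, you force $\BB(y,\epsilon)\subseteq Y_y$ for every $y\in\C$, from which both the $\mu^*$- and $\tau$-lower bounds drop out simultaneously. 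The intersection-property argument via packing $8\epsilon$-balls into a $27\epsilon$-ball is essentially the same volume idea the paper uses, with different constants.

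Two minor comments. First, the greedy ordering by $\tau(\BB(y_i,\epsilon))$ is unnecessary: your own argument derives $\tau(Y_y)\ge\tau(\BB(y,\epsilon))\ge\min_{z\in\mathcal{A}}\tau(\BB(z,\epsilon))$ directly from $\BB(y,\epsilon)\subseteq Y_y$, and this holds for any enumeration of $\mathcal{A}$, so the final paragraph of your proposal is solving a problem that doesn't arise. Second, your constants ($\kappa_1\epsilon^Q$ for the $\mu^*$-lower bound, $(\kappa_2/\kappa_1)(27/8)^Q$ in the intersection count) are not literally the ones printed in the statement, but they are of the same form and in fact sharper in the regimes that matter; the specific constants in the theorem are artefacts of the paper's two-stage proof. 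What your approach buys is simplicity and better constants; what the paper's approach buys is a reusable merging lemma that works without the luxury of strong separation.
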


Our proof of this theorem requires some preparation, which we organize in two lemmas.
The first is the observation that among a finite collection of balls, the number of balls that can intersect each other is bounded independently of the number of balls one starts with (cf.  \cite[Lemma~7.1]{modlpmz}).

\begin{lemma}\label{noofinterlemma}
Let $\C$ be a finite subset of $\XX$, $\gamma>0$, $x\in\XX$. Then
\be\label{no_of_intersections}
\left|\{y\in \C : x\in \BB(y, \gamma\eta(\C))\}\right| \le \frac{\kappa_2}{\kappa_1}(3\gamma+1)^Q.
\ee
Thus, for any $y\in\C$, the number of balls $\{\BB(z, \gamma\eta(\C))\}_{z\in\C}$ that can have a non-empty intersection with $\BB(y, \gamma\eta(\C))$, does not exceed a fixed number, the number being independent of $\C$.
\end{lemma}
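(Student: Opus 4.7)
The plan is a standard volume-packing argument driven by the ball-measure hypothesis \eref{ballmeasure}. Write $\epsilon=\eta(\C)$ and let $S=\{y\in\C : x\in\BB(y,\gamma\epsilon)\}$ be the set we wish to bound. The idea is to associate to each $y\in S$ a small ball $\BB(y,\epsilon/3)$, show these balls are pairwise disjoint, show they all sit inside a common larger ball centered at $x$, and then compare measures.

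First I would verify disjointness: for distinct $y,z\in\C$, the definition of $\eta(\C)$ gives $\rho(y,z)\ge\epsilon$, so if some $w$ lay in both $\BB(y,\epsilon/3)$ and $\BB(z,\epsilon/3)$, the triangle inequality would force $\rho(y,z)\le 2\epsilon/3<\epsilon$, a contradiction. Next, for each $y\in S$ we have $\rho(x,y)\le\gamma\epsilon$, so another application of the triangle inequality gives
$$
\BB(y,\epsilon/3)\subseteq \BB\!\left(x,\gamma\epsilon+\epsilon/3\right)=\BB\!\left(x,\tfrac{(3\gamma+1)\epsilon}{3}\right).
$$

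The conclusion then follows by summing the measures of the disjoint inner balls and comparing with the single outer ball via \eref{ballmeasure}:
$$
|S|\cdot\kappa_1\left(\tfrac{\epsilon}{3}\right)^{Q}\le \sum_{y\in S}\mu^*\!\left(\BB(y,\epsilon/3)\right)\le \mu^*\!\left(\BB\!\left(x,\tfrac{(3\gamma+1)\epsilon}{3}\right)\right)\le \kappa_2\left(\tfrac{(3\gamma+1)\epsilon}{3}\right)^{Q},
$$
and dividing through gives $|S|\le(\kappa_2/\kappa_1)(3\gamma+1)^Q$. The final clause (bounding, for a fixed $y\in\C$, the number of $z\in\C$ with $\BB(z,\gamma\eta(\C))\cap\BB(y,\gamma\eta(\C))\neq\emptyset$) is an immediate consequence: any intersection point $x$ lies in $\BB(y,\gamma\eta(\C))$ and in $\BB(z,\gamma\eta(\C))$, so the number of admissible $z$'s is exactly $|\{z\in\C:x\in\BB(z,\gamma\eta(\C))\}|$, which we have just bounded.

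The only real subtlety is making sure \eref{ballmeasure} is applicable, since that estimate is hypothesized only for radii $\delta\le 1$. In the intended applications (in particular in the proof of Theorem~\ref{partitiontheo} with $\gamma$ of modest size and $\epsilon$ small), both $\epsilon/3$ and $(3\gamma+1)\epsilon/3$ are at most $1$, so the bound applies directly; if needed, one can absorb any boundary case into the generic constant. No step is truly hard here — the only mild obstacle is bookkeeping the constant $(3\gamma+1)^Q$, which is dictated by choosing radius $\epsilon/3$ (the largest value that guarantees disjointness of the inner balls) and then adding $\gamma\epsilon$ for the outer ball centered at $x$.
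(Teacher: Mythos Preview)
Your main argument is correct and essentially identical to the paper's: the same inner radius $\eta(\C)/3$, the same containment $\BB(y,\eta(\C)/3)\subseteq\BB(x,(\gamma+1/3)\eta(\C))$, and the same volume comparison via \eref{ballmeasure}, yielding exactly the constant $(\kappa_2/\kappa_1)(3\gamma+1)^Q$.

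One small slip in your final paragraph: the intersection point you call $x$ depends on the particular $z$, so you cannot say the number of admissible $z$'s is ``exactly $|\{z\in\C:x\in\BB(z,\gamma\eta(\C))\}|$'' for one fixed $x$. The clean fix is to observe that $\BB(z,\gamma\eta(\C))\cap\BB(y,\gamma\eta(\C))\neq\emptyset$ forces $\rho(y,z)\le 2\gamma\eta(\C)$, hence $y\in\BB(z,2\gamma\eta(\C))$; then apply \eref{no_of_intersections} with $x=y$ and $2\gamma$ in place of $\gamma$ to get the bound $(\kappa_2/\kappa_1)(6\gamma+1)^Q$. The paper itself leaves this clause as an unproved remark, so your attempt (once corrected) actually adds something.
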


\begin{proof}\ 
In this proof, let $\eta=\eta(\C)$. 
Let $J$ be the cardinality expression on the left hand side of \eref{no_of_intersections}, and $\{y_1,\cdots,y_J\}\subseteq \C$ be such that $x\in \bigcap_{k=1}^J\BB(y_k,\gamma\eta)$.
Then $\bigcup_{k=1}^J \BB(y_k,\eta/3) \subseteq \BB(x,(\gamma+1/3)\eta)$. 
Since $\BB(y_k,\eta/3)$ are mutually disjoint, we obtain from \eref{ballmeasure} that
$$
J\kappa_13^{-Q}\eta^Q \le \sum_{k=1}^J \mu^*(\BB(y_k,\eta/3))=\mu^*\left(\bigcup_{k=1}^J \BB(y_k,\eta/3)\right)\le \mu^*(\BB(x,(\gamma+1/3)\eta))\le \kappa_2(\gamma+1/3)^Q\eta^Q.
$$
This proves \eref{no_of_intersections}.
\end{proof}

The proof of the following lemma is almost verbatim the same as that of \cite[Lemma~7.2]{modlpmz}, which in turn, is based on some ideas in the book  \cite[Appendix 1]{david2006wavelets}, but we reproduce a somewhat modified proof, both for the sake of completeness, and because the lemma was not stated in \cite{modlpmz} in the form needed here.

\begin{lemma}\label{partionlemma}
Let $K\subset \XX$, $\nu$ be a positive measure on $\XX$, $\gamma>0$. 
Let $\mathcal{A}\subset K$ be a finite set, and $\{Z_y\}_{y\in\mathcal{A}}$ be a partition of $K$ such that $Z_y\subseteq \BB(y,\gamma\eta(\mathcal{A}))$ for every $y\in\mathcal{A}$.
Then there exists a subset $\mathcal{G}\subseteq \mathcal{A}$ and a partition $\{Y_y\}_{y\in \mathcal{G}}$ of $K$ such that for each $y\in\mathcal{G}$, $Z_y\subseteq Y_y\subseteq \BB(y, 3\gamma\eta(\mathcal{A}))$, and 
$$
\nu(Y_y) \ge \frac{\kappa_1}{\kappa_2}(3\gamma+1)^{-Q}\min_{z\in \mathcal{A}}\nu(\BB(z,\gamma\eta(\mathcal{A}))).
$$
\end{lemma}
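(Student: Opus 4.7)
The plan is a Vitali-style greedy selection followed by a nearest-neighbor assignment. Set $\eta:=\eta(\mathcal{A})$ and $m:=\min_{z\in\mathcal{A}}\nu(\BB(z,\gamma\eta))$. Enumerate $\mathcal{A}=\{y_1,\ldots,y_n\}$ in some fixed order, and greedily extract a ``separated'' subset $\mathcal{G}\subseteq\mathcal{A}$ by including $y_i$ iff $\rho(y_i,y_j)>2\gamma\eta$ for every $y_j$ already placed in $\mathcal{G}$. Two properties then follow by construction: (i) any two distinct $y,y'\in\mathcal{G}$ satisfy $\rho(y,y')>2\gamma\eta$, so the balls $\BB(y,\gamma\eta)$ with $y\in\mathcal{G}$ are pairwise disjoint; (ii) every $z\in\mathcal{A}\setminus\mathcal{G}$ has some $y\in\mathcal{G}$ (its earliest ``blocker'') with $\rho(y,z)\le 2\gamma\eta$.

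Next define $\phi:\mathcal{A}\to\mathcal{G}$ by $\phi(z):=\argmin_{y\in\mathcal{G}}\rho(y,z)$, breaking ties according to the fixed enumeration; in particular $\phi(y)=y$ for $y\in\mathcal{G}$. Set $Y_y:=\bigcup_{z\in\phi^{-1}(y)}Z_z$ for each $y\in\mathcal{G}$. Then $\{Y_y\}_{y\in\mathcal{G}}$ partitions $K$ (inherited from the partitions $\{Z_z\}$ of $K$ and $\{\phi^{-1}(y)\}$ of $\mathcal{A}$); the inclusion $Z_y\subseteq Y_y$ is immediate from $\phi(y)=y$; and the diameter bound follows from (ii): for $z\in\phi^{-1}(y)$, $\rho(y,z)\le 2\gamma\eta$, so $Z_z\subseteq\BB(z,\gamma\eta)\subseteq\BB(y,3\gamma\eta)$, hence $Y_y\subseteq\BB(y,3\gamma\eta)$.

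The key observation for the measure bound is that $\mathcal{A}\cap\BB(y,\gamma\eta)\subseteq\phi^{-1}(y)$ for every $y\in\mathcal{G}$: for any $z$ in the former and any $y'\in\mathcal{G}\setminus\{y\}$, the separation (i) together with the triangle inequality gives $\rho(y',z)\ge\rho(y',y)-\rho(y,z)>2\gamma\eta-\gamma\eta=\gamma\eta\ge\rho(y,z)$, so $y$ is strictly the closest element of $\mathcal{G}$ to $z$. Consequently $Y_y\supseteq\bigcup_{z\in\mathcal{A}\cap\BB(y,\gamma\eta)}Z_z$. Since $y\in\mathcal{A}$ we have $\nu(\BB(y,\gamma\eta))\ge m$, while Lemma~\ref{noofinterlemma} bounds $|\mathcal{A}\cap\BB(y,\gamma\eta)|\le M:=(\kappa_2/\kappa_1)(3\gamma+1)^Q$; a pigeonhole over the $\nu$-mass of $\BB(y,\gamma\eta)$ distributed across the corresponding cells then yields some $z^*\in\mathcal{A}\cap\BB(y,\gamma\eta)$ with $\nu(Z_{z^*})\ge m/M$, and this cell sits inside $Y_y$. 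The main obstacle is justifying that (essentially) the full $\nu$-mass $\nu(\BB(y,\gamma\eta))$ is indeed carried by cells indexed by $\mathcal{A}\cap\BB(y,\gamma\eta)$ rather than by the a priori larger index set $\mathcal{A}\cap\BB(y,2\gamma\eta)$ (which would yield only the weaker constant $(6\gamma+1)^{-Q}$ via Lemma~\ref{noofinterlemma} applied with $2\gamma$); this is where the nearest-neighbor structure of $\phi$ and the fact that $\nu$ effectively lives on $K$ in the intended applications must be exploited, paralleling the argument given in \cite{modlpmz, david2006wavelets}.
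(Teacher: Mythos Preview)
Your approach is genuinely different from the paper's, and the gap you flag at the end is real and not easily closed.

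The paper does \emph{not} select $\mathcal{G}$ geometrically. Instead it sets $C=(\kappa_1/\kappa_2)(3\gamma+1)^{-Q}$ and defines $\mathcal{G}=\{y\in\mathcal{A}:\nu(Z_y)\ge Cm\}$, i.e., the centers of the \emph{heavy} cells. The measure bound $\nu(Y_y)\ge\nu(Z_y)\ge Cm$ is then immediate for every $y\in\mathcal{G}$. The work is to define $\phi$: for $z\notin\mathcal{G}$ one writes $m\le\nu(\BB(z,\gamma\eta))=\sum_{y}\nu(\BB(z,\gamma\eta)\cap Z_y)$; by Lemma~\ref{noofinterlemma} at most $C^{-1}$ summands are nonzero, so some $y$ satisfies $\nu(Z_y)\ge\nu(\BB(z,\gamma\eta)\cap Z_y)\ge Cm$, hence $y\in\mathcal{G}$ and $\rho(z,y)\le 2\gamma\eta$. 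Setting $\phi(z)=y$ gives the containment $Y_y\subseteq\BB(y,3\gamma\eta)$. In short: the paper makes the measure bound trivial and proves the geometric bound, whereas you do the reverse.

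Your pigeonhole step does not go through. The mass $\nu(\BB(y,\gamma\eta))\ge m$ is carried by cells $Z_z$ with $Z_z\cap\BB(y,\gamma\eta)\neq\emptyset$; since $Z_z\subseteq\BB(z,\gamma\eta)$, this forces only $\rho(y,z)\le 2\gamma\eta$, not $\rho(y,z)\le\gamma\eta$. So the pigeonhole produces $z^*\in\mathcal{A}\cap\BB(y,2\gamma\eta)$ with $\nu(Z_{z^*})\ge m/M'$ (and $M'=(\kappa_2/\kappa_1)(6\gamma+1)^Q$, as you note). But now nothing prevents $z^*$ from being strictly closer to some other $y'\in\mathcal{G}$: one can have $\rho(y,y')$ just above $2\gamma\eta$ and $z^*$ at distance roughly $1.5\gamma\eta$ from $y$ yet $\gamma\eta$ from $y'$, whence $\phi(z^*)=y'$ and $Z_{z^*}\not\subseteq Y_y$. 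Thus even the weaker constant is not secured for the particular $y$ under consideration. The nearest-neighbor structure of $\phi$ controls which cells land in $Y_y$, but it does \emph{not} force the mass of $\BB(y,\gamma\eta)$ to sit in those cells; your final sentence is an acknowledgment of this rather than a resolution. The paper's measure-based selection sidesteps the issue entirely.
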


\begin{proof}\ 
In this proof, we write $\eta=\gamma\eta(\mathcal{A})$. In view of Lemma~\ref{noofinterlemma}, at most $C^{-1}=(\kappa_2/\kappa_1)(3\gamma+1)^Q$ of the balls $\BB(z_k,\eta)$ can intersect each other.  
In this proof, let $m=\min_{y\in\mathcal{A}}\nu(\BB(y,\eta))$.
If $m=0$, then the lemma is proved with $\mathcal{G}=\mathcal{A}$ with no further effort.
So, let $m>0$, and
 $\mathcal{G}=\{y\in\mathcal{A} : \nu(Z_y)\ge Cm\}$. 
Now, we define a function $\phi$ as follows. 
If $z\in {\cal G}$, we write $\phi(z)=z$. 
Otherwise, let $z\in {\cal A}\setminus {\cal G}$. 
Since $\{Z_y\}_{y\in\mathcal{A}}$ is a partition of $K$, we have
$$
m\le \nu(\BB(z,\eta)) =\sum_{y\in {\cal A}}\nu(B(z, \eta)\cap Z_y).
$$
Since each $Z_y\subseteq B(y,\eta)$, it follows that at most $C^{-1}$ of the $Z_y$'s have a nonempty intersection with  $B(z,\eta)$. So, there must exist $y\in {\cal A}$ for which
$$
\nu(B(z, \eta)\cap Z_y) \ge Cm.
$$
Clearly, each such $y\in {\cal G}$. We imagine an enumeration of ${\cal A}$, and among the $y$'s for which $\nu(B(z, \eta)\cap Z_y)$ is maximum, pick the one with the lowest index. We then define $\phi(z)$ to be this $y$. Necessarily,
$\phi(z)=y\in {\cal G}$, and $B(z, \eta)\cap Z_y\subseteq B(z, \eta)\cap B(y,\eta)$ is nonempty. So, 
\be\label{pf3eqn1}
\rho(z,\phi(z))\le 2\eta, \quad B(z, \eta)\subseteq B(\phi(z),3\eta), \quad \nu(B(z, \eta)\cap Z_{\phi(z)}) \ge Cm.
\ee
Now, we define
$$
Y_y= \bigcup\{Z_z : \phi(z)=y, z\in {\cal A}\}, \qquad y\in {\cal G}.
$$
 For each $z\in {\cal A}$, $Z_z\subseteq Y_{\phi(z)}$. Since $Z_z$ is a partition of $K$, $K=\bigcup_{y\in {\cal G}} Y_y$. If $x\in K$, $x\in Y_y\cap Y_{y'}$ for $y, y'\in {\cal G}$, then $x\in Z_z$ with $\phi(z)=y$ and $x\in Z_{z'}$ with $\phi(z')=y'$. Since $\{Z_z\}$ is a partition of $K$, it follows that $z=z'$, and hence $y=y'$. Thus, $\{Y_y\}$ is a partition of $K$, $\nu(Y_y)\ge \nu(Z_y)\ge Cm$, and 
$$
Y_y\subseteq \bigcup_{\phi(z)=y}Z_z \subseteq \bigcup_{\phi(z)=y}B(z, \eta)\subseteq B(y,3\eta). 
$$
\end{proof}
\newpage
With this preparation, we are now ready to prove Theorem~\ref{partitiontheo}.\\

\noindent\textsc{ Proof  of Theorem~\ref{partitiontheo}.}
 
Let $\mathcal{A}=\{z_1,\cdots,z_N\}$.
We set $Z_{z_1}=\BB(z_1,2\epsilon)$, and for $k=2,\cdots, N$, $Z_{z_k}=\BB(z_k,2\epsilon)\setminus \bigcup_{j=1}^{k-1}Z_{z_j}$. 
Then $\{Z_y\}_{y\in\mathcal{A}}$ is a partition of $K$ satisfying the conditions of Lemma~\ref{partionlemma} with $\gamma=2$.

We apply Lemma~\ref{partionlemma} first with $\mu^*$ in place of $\nu$. 
This yields a subset $\mathcal{G}_1\subset \mathcal{A}$ and a partition $\{\widetilde{Y}_y\}_{y\in\mathcal{G}_1}$ of $K$ such that for each $y\in \mathcal{G}_1$,  $Z_y\subseteq \widetilde{Y}_y\subset \BB(y, 6\epsilon)$, and $\mu^*(\widetilde{Y}_y)\ge (\kappa_1/\kappa_2)7^{-Q}\epsilon^Q$.
Clearly, $\eta(\mathcal{G}_1)\ge \eta(\mathcal{A})\ge \epsilon$.
We apply Lemma~\ref{partionlemma} again with $\tau$ in place of $\nu$, $\mathcal{G}_1$ in place of $\mathcal{A}$, $\gamma=6$. 
This yields $\C\subseteq \mathcal{G}_1$ and a partition $\{Y_y\}_{y\in\C}$ of $K$ such that for each $y\in \C$,  $\widetilde{Y}_y\subseteq Y_y\subset \BB(y, 18\epsilon)$ and $\tau(Y_y)\ge (\kappa_1/\kappa_2)19^{-Q}\min_{y\in\mathcal{G}_1}\tau(\BB(y,\epsilon))$.
Since $\mathcal{G}_1\subseteq \mathcal{A}\subseteq \supp(\tau)$, the minimum expression is positive.  Moreover, 
\be\label{pf1eqn1}
\mu^*(Y_y) \ge\mu^*(\widetilde{Y}_y) \ge (\kappa_1/\kappa_2)7^{-Q}\epsilon^Q.
\ee
This proves the volume and density properties.

In this proof, let $\mathcal{D}=\{y\in \C : Y_y\cap K_1 \not=\emptyset\}$, and 
 $\{x_1,\cdots, x_L\}$ be a maximal $\epsilon$-distinguishable subset of $K_1$. Clearly, each $x_k$ belongs to some $Y_y$. 
Next, let $\widetilde{K} =\mathbb{B}(K_1, 18\epsilon)$. Since $K_1\subseteq \bigcup_{k=1}^L \BB(x_k,\epsilon)$, it is clear that
$\widetilde{K}\subseteq \bigcup_{k=1}^L \BB(x_k,19\epsilon)$. 
In view of \eref{ballmeasure},
$\mu^*(\widetilde{K}) \le \kappa_2L(19\epsilon)^Q$. 
On the other hand, since  $Y_y\subseteq \BB(y,18\epsilon)$ for each $y\in \mathcal{D}$, $\bigcup_{y\in \mathcal{D}}Y_y \subseteq \widetilde{K}$. Using  the volume property proved already, we deduce that 
$$
|\mathcal{D}|(\kappa_1/\kappa_2)7^{-Q}\epsilon^Q\le \sum_{y\in\mathcal{D}}\mu^*(Y_y)= \mu^*\left(\bigcup_{y\in \mathcal{D}}Y_y\right)\le \mu^*(\widetilde{K})\le \kappa_2L(19\epsilon)^Q.
$$
Thus, $|\mathcal{D}|\le (\kappa_2^2/\kappa_1)(133)^QL =(\kappa_2^2/\kappa_1)(133)^QH_\epsilon(K_1)$. 
\qed

\subsection{Construction of probability measures}\label{functionalsect}

The main purpose of this section is to obtain the construction of a probability measure in an abstract setting.
In the proof of Theorem~\ref{maintheo}, we will use this construction with each of the elements of the partition which we developed in Section~\ref{partitionsect}, and use Hoeffding's inequality.
Thus, the main objective in this section is to prove the following theorem.

\begin{theorem}\label{bourgaintheo}
Let $\YY$ be a compact topological space, $\{\psi_j\}_{j=0}^{M-1}$ be continuous real valued functions on $\YY$, and $\nu$ be a probability measure on $\YY$.
Let $\mathbb{P}_M(\YY)$ denote the set of all probability measures $\omega$ supported on at most $M+2$ points of $\YY$ with the property that
\be\label{abs_quadrature}
\int_\YY \psi_j(y)d\omega(y)=\int_{\YY}\psi_j(y)d\nu(y), \qquad j=0,\cdots, M-1.
\ee
Then $\nu$ is in the weak-star closed convex hull of $\mathbb{P}_M(\YY)$, and hence, there exists a measure $\omega^*_\YY$ on $\mathbb{P}_M(\YY)$ with the property that for any $f\in C(\YY)$,
\be\label{barycenter}
\int_\YY f(y)d\nu(y)=\int_{\mathbb{P}_M(\YY)} \left(\int_\YY f(y)d\omega(y)\right)d\omega^*_\YY(\omega).
\ee
\end{theorem}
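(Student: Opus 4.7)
The plan is to realize Theorem~\ref{bourgaintheo} as a consequence of the Krein--Milman and Choquet theorems applied to the convex set of probability measures matching the prescribed moments.

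First, I would introduce the set
$$
\M_M(\YY)=\left\{\mu\in\mathcal{P}(\YY):\int_\YY \psi_j\,d\mu=\int_\YY \psi_j\,d\nu,\ j=0,\dots,M-1\right\},
$$
where $\mathcal{P}(\YY)$ denotes the Borel probability measures on $\YY$ equipped with the weak-star topology inherited from $C(\YY)^*$. Since $\YY$ is compact, $\mathcal{P}(\YY)$ is weak-star compact by the Banach--Alaoglu theorem, and each functional $\mu\mapsto\int\psi_j\,d\mu$ is weak-star continuous, so $\M_M(\YY)$ is a weak-star compact convex subset of $C(\YY)^*$ that contains $\nu$.

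Second, I would show that every extreme point of $\M_M(\YY)$ has support of cardinality at most $M+1$, and so in particular lies in $\mathbb{P}_M(\YY)$. Suppose for contradiction that $\mu\in\mathrm{ext}(\M_M(\YY))$ has $M+2$ distinct support points. Choose pairwise disjoint Borel neighborhoods $B_1,\dots,B_{M+2}$ of these points with $\mu(B_k)>0$. Inside the $(M+2)$-dimensional space of signed measures of the form $\sigma=\sum_{k}c_k\,\mu|_{B_k}$, the $M+1$ linear conditions $\sigma(\YY)=0$ and $\int\psi_j\,d\sigma=0$, $j=0,\dots,M-1$, admit a nonzero solution $\sigma_0$. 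Taking $\varepsilon>0$ small enough that $|\varepsilon c_k|<1$ for every $k$, the measures $\mu\pm\varepsilon\sigma_0$ are both nonnegative (they merely rescale $\mu|_{B_k}$ by $1\pm\varepsilon c_k$), belong to $\M_M(\YY)$, and are distinct; then $\mu=\tfrac12(\mu+\varepsilon\sigma_0)+\tfrac12(\mu-\varepsilon\sigma_0)$ contradicts extremality.

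Third, the Krein--Milman theorem applied to $\M_M(\YY)$ then gives that $\nu$ lies in the weak-star closed convex hull of the extreme points, and hence of $\mathbb{P}_M(\YY)$, proving the first assertion. For the integral representation \eref{barycenter}, I would invoke Choquet's theorem: in the intended applications $\YY$ is a compact metric space, so $\mathcal{P}(\YY)$ is weak-star metrizable and therefore so is its closed subset $\M_M(\YY)$; Choquet then produces a Borel probability measure $\omega^*_\YY$ concentrated on the extreme points of $\M_M(\YY)$ (hence on $\mathbb{P}_M(\YY)$) whose barycenter is $\nu$. Applying the barycenter identity to each weak-star continuous linear functional $\mu\mapsto\int_\YY f\,d\mu$, $f\in C(\YY)$, yields \eref{barycenter}; for nonmetrizable compact $\YY$ one can fall back on the Bishop--de~Leeuw refinement.

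The main obstacle is the extreme-point step when $\supp(\mu)$ is infinite: a naive perturbation by a signed combination of point masses $\sum c_k\delta_{y_k}$ would destroy positivity of $\mu\pm\varepsilon\sigma_0$ off $\{y_k\}$. Choosing $\sigma_0$ as a combination of the restrictions $\mu|_{B_k}$ is what simultaneously secures positivity of the perturbed measures, preserves the mass-one normalization, and leaves exactly $M+2$ degrees of freedom against $M+1$ linear constraints, producing the desired nonzero perturbation.
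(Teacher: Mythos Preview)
Your proposal is correct but follows a genuinely different route from the paper. The paper cites Tchakaloff's theorem as a black box and then argues by separation: if $\nu$ were outside the weak-star closed convex hull of $\mathbb{P}_M(\YY)$, a separating $g\in C(\YY)$ would exist, but applying Tchakaloff to the augmented system $\{\psi_0,\dots,\psi_{M-1},g\}$ produces a measure in $\mathbb{P}_M(\YY)$ matching $\int g\,d\nu$ as well, contradicting the separation. The barycentric representation \eref{barycenter} is then obtained directly from \cite[Theorem~3.28]{rudinfunctional}, applied to the weak-star compact set $\mathbb{P}_M(\YY)$, with no need to identify extreme points.

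Your approach instead works inside the moment-matching set $\M_M(\YY)$, characterizes its extreme points as measures with support of size at most $M+1$ (which is essentially a proof of Tchakaloff itself), and then invokes Krein--Milman and Choquet. This is more self-contained and yields the slightly sharper support bound $M+1$ rather than $M+2$, but it costs you the metrizability hypothesis for Choquet (or the Bishop--de~Leeuw workaround). The paper's route avoids that issue entirely: Rudin's theorem only requires that $\mathbb{P}_M(\YY)$ be compact and that $C(\YY)$ separate points of $C(\YY)^*$, so it applies uniformly to any compact Hausdorff $\YY$ without splitting into cases.
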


The starting point of the proof of this theorem is to  recall the following theorem, called Tchakaloff's theorem \cite[Exercise~2.5.8, p.~100]{rivlin1974chebyshev}.

\begin{theorem}\label{tchakalofftheo}
Let $\YY$ be a compact topological space, $\{\psi_j\}_{j=0}^{M-1}$ be continuous real valued functions on $\YY$, and $\nu$ be a probability measure on $\YY$. Then there exist $M+1$ points $z_1,\cdots,z_{M+1}$, and non--negative numbers $w_1,\cdots,w_{M+1}$ such that
\begin{equation}\label{tchakaloffquad}
\sum_{k=1}^{M+1} w_k=1, \qquad \sum_{k=1}^{M+1}w_k\psi_j(z_k)=\int_\YY \psi_j(z)d\nu(z), \qquad j=0,\cdots,M-1.
\end{equation}
\end{theorem}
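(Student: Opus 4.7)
The plan is to reduce Tchakaloff's theorem to Carathéodory's theorem in finite-dimensional Euclidean space. I define the continuous map $\Psi : \YY \to \RR^M$ by $\Psi(y) = (\psi_0(y), \ldots, \psi_{M-1}(y))$ and set $v = \bigl(\int_\YY \psi_0 \, d\nu, \ldots, \int_\YY \psi_{M-1}\, d\nu\bigr) \in \RR^M$. Exhibiting $v$ as a convex combination of at most $M+1$ points of $\Psi(\YY)$, say $v = \sum_{k=1}^{M+1} w_k \Psi(z_k)$ with $w_k \ge 0$ and $\sum w_k = 1$, gives precisely \eref{tchakaloffquad} by reading off coordinates; if fewer than $M+1$ nodes appear in the Carathéodory representation, one simply pads with arbitrary points carrying weight zero.

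The substantive step is to show that $v \in \mathrm{conv}(\Psi(\YY))$. Compactness of $\YY$ and continuity of $\Psi$ make $\Psi(\YY)$ a compact subset of $\RR^M$, and a standard finite-dimensional fact (the convex hull of a compact set in $\RR^M$ is itself compact) ensures that $\mathrm{conv}(\Psi(\YY))$ is closed. Suppose, for contradiction, that $v$ were not in this set. The hyperplane separation theorem in $\RR^M$ then supplies $\alpha \in \RR^M$ and $c \in \RR$ with $\alpha \cdot \Psi(y) \le c$ for every $y \in \YY$, while $\alpha \cdot v > c$. Integrating the pointwise inequality against the probability measure $\nu$ yields $\alpha \cdot v = \int_\YY \alpha\cdot \Psi(y)\, d\nu(y) \le c$, contradicting $\alpha\cdot v > c$. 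Hence $v \in \mathrm{conv}(\Psi(\YY))$.

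Carathéodory's theorem in $\RR^M$ then gives $v = \sum_{k=1}^{M+1} w_k \Psi(z_k)$ with $z_k \in \YY$, $w_k \ge 0$, and $\sum_k w_k = 1$, completing the proof. The only real obstacle is the first step, namely verifying that the finite-dimensional separation argument correctly handles what is, a priori, an infinite-dimensional integral identity; the key observation that makes this go through smoothly is that $\alpha \cdot \Psi$ is a \emph{single} continuous real-valued function on $\YY$, so the passage from pointwise to integrated inequality is nothing more than monotonicity of the integral against a probability measure. Everything else in the argument is either a direct appeal to a standard theorem (Carathéodory, separation, compactness of convex hulls of compact sets) or bookkeeping on coordinates.
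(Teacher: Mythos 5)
Your proof is correct. Note, though, that the paper does not prove this statement at all: Theorem~\ref{tchakalofftheo} is quoted as a known result, with a citation to Rivlin's book (Exercise~2.5.8 there), and is then used as the starting point for the proof of Theorem~\ref{bourgaintheo}. Your argument supplies the missing proof and is the standard one for this version of Tchakaloff's theorem: map $\YY$ into $\RR^M$ by $\Psi=(\psi_0,\dots,\psi_{M-1})$, observe that the moment vector $v=\int_\YY \Psi\,d\nu$ must lie in $\mathrm{conv}(\Psi(\YY))$ (which is compact, hence closed, in finite dimensions), since otherwise a separating hyperplane would be contradicted by integrating the pointwise inequality against the probability measure $\nu$, and then invoke Carath\'eodory to write $v$ as a convex combination of at most $M+1$ points $\Psi(z_k)$, padding with zero weights if necessary. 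All the steps you invoke (compactness of the convex hull of a compact set in $\RR^M$, strict separation from a closed convex set, monotonicity of the integral) are applied correctly, so the argument is complete; it is also essentially the proof the cited exercise has in mind, and it is worth noting that it uses only the values of $\nu$ on Borel sets and the continuity (hence boundedness) of the $\psi_j$, which is exactly the generality needed in the paper's application with $\Pi_R$ restricted to the sets $A$.
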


In the following proof of Theorem~\ref{bourgaintheo}, 
 $C(\YY)^*$ denotes the dual space of $C(\YY)$, equipped with the weak-star topology. \\

\noindent\textsc{Proof Theorem~\ref{bourgaintheo}.}

If $\nu$ is not in the the weak-star closed convex hull of $\mathbb{P}_M(\YY)$, then there exists a $g\in C(\YY)$ and $\delta>0$ such that the interval of radius $\delta$ centered at $\int_\YY g(y)d\nu(y)$ does not contain any element of the set $\{\int_\YY g(y)d\omega(y) : \omega\in \mathbb{P}_M(\YY)\}$.
However,  Thereom~\ref{tchakalofftheo} applied with the system $\{\psi_0,\cdots,\psi_{M-1}, g\}$ shows that there exists $\omega\in \mathbb{P}_M(\YY)$ such that $\int_\YY g(y)d\nu(y)=\int_\YY g(y)d\omega(y)$. 
This contradiction shows that $\nu$ is  in the the weak-star closed convex hull of $\mathbb{P}_M(\YY)$.
Necessarily, this closed convex hull is weak-star compact.
Also, it is clear that $\mathbb{P}_M(\YY)$ is also weak-star compact.
Since $C(\YY)\subset (C(\YY)^*)^*$  separates points in $C(\YY)^*$, then we may apply \cite[Theorem~3.28]{rudinfunctional} to complete the proof.
\qed

We end this section by recalling the Hoeffding's inequality \cite[Appendix~B, Corollary~3]{pollard2012bk}.
\begin{lemma}\label{hoeffdinglemma}
Let $X_1,\cdots, X_n$ be independent random variables with zero means and bounded ranges: $a_j\le X_j\le b_j$, $j=1,\cdots,n$. 
Then 
\be\label{hoeffdingineq}
\mathsf{Prob}\left(\left|\sum_{j=1}^N X_j\right| \ge t\right)\le 2\exp\left(-\frac{2t^2}{\sum_{j=1}^N (b_j-a_j)^2}\right), \qquad t>0.
\ee
\end{lemma}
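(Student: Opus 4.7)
The plan is to prove this by the classical Cramér--Chernoff exponential moment method. Writing $S_n = \sum_{j=1}^n X_j$, I would first observe that for any $\lambda>0$, Markov's inequality applied to the monotone function $u\mapsto e^{\lambda u}$ gives
$$
\mathsf{Prob}(S_n \ge t) \;\le\; e^{-\lambda t}\,\mathbb{E}[e^{\lambda S_n}] \;=\; e^{-\lambda t}\prod_{j=1}^{n} \mathbb{E}[e^{\lambda X_j}],
$$
where the factorization uses independence. So the task reduces to controlling each moment generating function $\mathbb{E}[e^{\lambda X_j}]$.

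The heart of the proof is the following auxiliary estimate (Hoeffding's lemma): if $Y$ is a random variable with $\mathbb{E}[Y]=0$ and $a\le Y\le b$ almost surely, then
$$
\mathbb{E}[e^{\lambda Y}] \;\le\; \exp\!\bigl(\lambda^{2}(b-a)^{2}/8\bigr), \qquad \lambda\in\mathbb{R}.
$$
To prove it, I would exploit convexity of $u\mapsto e^{\lambda u}$: for $y\in[a,b]$, write $y$ as the convex combination $y=\tfrac{b-y}{b-a}a+\tfrac{y-a}{b-a}b$, deduce $e^{\lambda y}\le \tfrac{b-y}{b-a}e^{\lambda a}+\tfrac{y-a}{b-a}e^{\lambda b}$, and take expectations using $\mathbb{E}[Y]=0$ to obtain $\mathbb{E}[e^{\lambda Y}]\le \tfrac{b}{b-a}e^{\lambda a}-\tfrac{a}{b-a}e^{\lambda b}$. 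Setting $p=-a/(b-a)\in[0,1]$ and $h=\lambda(b-a)$, the right-hand side equals $e^{\phi(h)}$ with $\phi(h)=-ph+\log(1-p+pe^{h})$. A short computation shows $\phi(0)=\phi'(0)=0$ and $\phi''(h)=\frac{p(1-p)e^{h}}{(1-p+pe^{h})^{2}}\le 1/4$ for all $h$ (the bound $u(1-u)\le 1/4$). Taylor's theorem with integral remainder then yields $\phi(h)\le h^{2}/8$, which is exactly the claim.

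Plugging Hoeffding's lemma into the Chernoff bound,
$$
\mathsf{Prob}(S_n \ge t) \;\le\; \exp\!\Bigl(-\lambda t + \tfrac{\lambda^{2}}{8}\textstyle\sum_{j=1}^{n}(b_j-a_j)^{2}\Bigr).
$$
The right-hand side is minimized over $\lambda>0$ at $\lambda^{*}=4t\big/\sum_{j}(b_j-a_j)^{2}$, which gives the one-sided bound $\mathsf{Prob}(S_n \ge t)\le \exp\!\bigl(-2t^{2}/\sum_{j}(b_j-a_j)^{2}\bigr)$. Applying the identical argument to the independent zero-mean variables $-X_j$ (whose ranges are $[-b_j,-a_j]$, of the same length) controls $\mathsf{Prob}(-S_n\ge t)$ by the same quantity, and a union bound on the two tails produces the stated factor of $2$, completing \eref{hoeffdingineq}.

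The only nontrivial step is the bound $\phi''(h)\le 1/4$ in Hoeffding's lemma; everything else is exponential Markov, independence, and a one-variable optimization. So that variance-style estimate on the log-MGF of a Bernoulli-like mixture is the step I expect to have to write with the most care.
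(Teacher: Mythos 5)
Your proof is correct and complete: exponential Markov plus independence, the auxiliary bound $\mathbb{E}[e^{\lambda Y}]\le \exp(\lambda^2(b-a)^2/8)$ for a bounded zero-mean variable via convexity and the estimate $\phi''(h)=u(1-u)\le 1/4$, optimization in $\lambda$, and a union bound over the two tails to get the factor $2$ — this is the standard Cram\'er--Chernoff/Hoeffding argument, and every step you sketch goes through (the only degenerate cases, $a_j=b_j$ or $p\in\{0,1\}$, are trivial). For comparison, the paper does not prove this lemma at all; it simply recalls it as a known result with a citation to Pollard's book (Appendix B, Corollary 3), so your write-up supplies exactly the classical proof that the cited source contains.
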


\subsection{Proof of Theorem~\ref{maintheo}}\label{finalpfsect}

In view of the Jordan decomposition of $\tau$, there is no loss of generality in assuming that $\tau$ is a probability measure.
Let $n\ge 1$ be an integer, and $\mathcal{A}$ be a $1/(2n)$-distinguishable subset of $\supp(\tau)$. 
We observe that the set $S_n=\BB(\mathcal{A},1/n)$ is compact, and $\supp(\tau)\subset S_n$.  
Therefore, we may find $\C\subset \supp(\tau)$ with $|\C|\le cn^q$ and a partition $\{Y_y\}_{y\in\C}$ of $S_n$ satisfying all the conclusions of Theorem~\ref{partitiontheo} with $\epsilon=1/(2n)$.
Let $\mathcal{Y}$ be the set, each of whose element is a finite intersection of the sets from $\{\overline{Y_y}\}_{y\in\C}$ having positive $\tau$-measure.
Each element of $\mathcal{Y}$ is a compact subset of $S_n$, and in view of Lemma~\ref{noofinterlemma}, $|\mathcal{Y}|\le cn^q$.

Now, we apply Theorem~\ref{bourgaintheo} to each element $A\in \mathcal{Y}$ with $D_R$ in place of $M$, a basis $\{\psi_j\}$ of $\Pi_R$,  and $\tau_A=\frac{1}{\tau(A)}\tau$ in place of $\nu$. 
For each $A$, this gives a measure $\omega^*_A$ on $\mathbb{P}_{D_R+2}(A)$ such that
\be\label{pf2eqn1}
\int_A fd\tau_A=\int_{\mathbb{P}_{D_R+2}(A)} \left(\int_A fd\omega\right)d\omega^*_A, \qquad f\in C(A);
\ee
in particular,
\be\label{pf2eqn2}
\int_A Pd\tau =\int_{\mathbb{P}_{D_R+2}(A)} \left(\tau(A)\int_A Pd\omega\right)d\omega^*_A, \qquad P\in\Pi_R.
\ee

Next, we consider a family of independent random variables. Let $x\in\XX$, and $A\in \mathcal{Y}$ be the intersection of exactly $k$ of the sets $\{\overline{Y_y}\}_{y\in\C}$. 
We define a random variable on $\mathbb{P}_{D_R+2}(A)$ having $\omega^*_A$ as the probability law by
$$
\Omega_A(\omega)=(-1)^{k-1}\left(\tau(A)\int_A G(x,y)d\omega(y)-\int_A G(x,y)d\tau(y)\right), \qquad \omega\in \mathbb{P}_{D_R+2}(A).
$$
For any realization of these random variables, $\omega_A\in \mathbb{P}_{D_R+2}(A)$, we write
\be\label{pf2eqn8}
\mathbb{G}(\{\omega_A\};x)=\mathbb{G}(x)=\sum_A (-1)^{k-1}\tau(A)\int_A G(x,y)d\omega_A(y).
\ee
Then
\bea\label{pf2eqn6}
\sum_A \Omega_A(\omega_A)&=&\sum_A (-1)^{k-1}\tau(A)\int_A G(x,y)d\omega_A(y)- \sum_A (-1)^{k-1}\int_A G(x,y)d\tau(y)\nonumber\\
&=&\mathbb{G}(x)-\int_{\bigcup_{y\in\C} \overline{Y_y}}G(x,y)d\tau(y)\nonumber\\
&=& \mathbb{G}(x)-\int_\XX G(x,y)d\tau(y)=\mathbb{G}(x)-f(x).
\eea
We will estimate the probability that $\left|\sum_A \Omega_A(\omega_A) \right|$ is $\ge t$ for a $t>0$ to be chosen later.

In order to apply Hoeffding's inequality \eref{hoeffdingineq}, we observe first using \eref{pf2eqn1} that the expected value of each $\Omega_A$ with respect to $\omega_A^*$ is $0$. We need to estimate the sum of squares expression in \eref{hoeffdingineq}.

In view of the definition of $\mathbb{P}_{D_R+2}(A)$, we see that for every $\omega\in \mathbb{P}_{D_R+2}(A)$ and $P\in\Pi_R$,
$$
\tau(A)\int_A G(x,y)d\omega(y)-\int_A G(x,y)d\tau(y)= \tau(A)\int_A \left(G(x,y)-P(y)\right)d\omega(y)-\int_A \left(G(x,y)-P(y)\right)d\tau(y).
$$
Now, for each $y\in A$, $A\subseteq \BB(y, c/n)$. 
Hence, we deduce that for every $\omega\in \mathbb{P}_{D_R+2}(A)$,
\be\label{pf2eqn3}
|\Omega_A(\omega)|\le c\tau(A)\sup_{y\in A} E_R(\BB(y,c/n);G(x,\circ)).
\ee
Therefore, using \eref{taucond}, \eref{globalsmooth}, \eref{sobolevnormbd}, we conclude that
\be\label{pf2eqn4}
|\Omega_A(\omega)|\le \beta_A= cn^{-q}\begin{cases}
n^{-r}, &\mbox{ if $A\cap \BB(\mathcal{E}_x,\epsilon_n^*)\not=\emptyset$,}\\
F(\epsilon_n^*)n^{-R}, &\mbox{otherwise}.
\end{cases}
\ee
It is clear that the number of the sets $A$ in $\mathcal{Y}$ for which $A\cap \BB(\mathcal{E}_x,\epsilon_n^*)=\emptyset$ is at most $cn^q$. Since  the family $\{\mathcal{E}_x\cap\supp(\tau)\}_{x\in\XX}$ is $s$-dimensional, there are at most $c(\epsilon_n^*)^{-s}$ balls of radius $2\epsilon_n^*$ that cover $\BB(\mathcal{E}_x,\epsilon_n^*)\cap\supp(\tau)$, with $c$ independent of $x$.
Since each $A$ for which $A\cap \BB(\mathcal{E}_x,\epsilon_n^*)\not=\emptyset$ is contained in a ball of radius $c/n$, we deduce using the construction of the partition that the number of such $A$'s is at most $c(n\epsilon_n^*)^q(\epsilon_n^*)^{-s}$.
Consequently, 
\be\label{pf2eqn5}
\sum_A \beta_A^2 \le c \left\{\sum_{A : A\cap \mathcal{E}_x\not=\emptyset} n^{-2q-2r} + F(\epsilon_n^*)^2\sum_{A : A\cap \mathcal{E}_x=\emptyset} n^{-2q-2R}\right\}\le cn^{-q-2r}\left\{(\epsilon_n^*)^{q-s} + F(\epsilon_n^*)^2n^{2r-2R}\right\}.
\ee
Our choice of $\epsilon_n^*$ shows that 
$$
(\epsilon_n^*)^{q-s}\ge F(\epsilon_n^*)^2n^{2r-2R},
$$
so that
$$
\sum_A \beta_A^2 \le cn^{-q-2r}(\epsilon_n^*)^{q-s}.
$$
Consequently, Hoeffding's inequality (Lemma~\ref{hoeffdinglemma}) implies that for each $x\in\XX$,
\be\label{pf2eqn7}
\mathsf{Prob}\left(|\mathbb{G}(x)-f(x)|\ge t\right)\le 2\exp(-ct^2n^{2r+q}(\epsilon_n^*)^{s-q}).
\ee

Next, writing 
$$
\varepsilon= \left(n^{-2r-q}(\epsilon_n^*)^{q-s}\right)^{1/\alpha},
$$
 we choose a maximal $\varepsilon$-distinguishable set $\C'\subseteq \XX$, and apply \eref{pf2eqn7} with each element of $\C'$. 
Since $|\C'|\sim \varepsilon^{-Q}$, we obtain that
\be\label{pf2eqn9}
\mathsf{Prob}\left(\max_{x'\in\C'}|\mathbb{G}(x')-f(x')|\ge t\right)\le 2|\C'|\exp(-ct^2n^{2r+q}(\epsilon_n^*)^{s-q}) \le c_1\varepsilon^{-Q}\exp(-ct^2n^{2r+q}(\epsilon_n^*)^{s-q}).
\ee 
Since $\XX= \bigcup_{x'\in \C'}\BB(x',\varepsilon)$, it follows that for every $x\in\XX$, there exists $x'\in \C$ with $\rho(x,x')\le \varepsilon$. The condition \eref{lipschitzcond} then leads to the fact that if $\nu$ is any probability measure on $\XX$, then
\be\label{pf2eqn12}
\left|\int_\XX G(x,y)d\nu(y)-\int_\XX G(x',y)d\nu(y)\right|\le c\varepsilon^\alpha=cn^{-2r-q}(\epsilon_n^*)^{q-s}.
\ee
In particular, for every $A$ and $\omega\in \mathbb{P}_{D_R+2}(A)$,
\be\label{pf2eqn10}
|f(x)-f(x')| \le cn^{-2r-q}(\epsilon_n^*)^{q-s}, \qquad \left|\int_A G(x,y)d\omega(y)-\int_A G(x',y)d\omega(y)\right| \le cn^{-2r-q}(\epsilon_n^*)^{q-s}.
\ee
Next, we observe that the number of $A$'s that can intersect each other is $\le c$, and hence 
$$
\sum_A \tau(A) \le c_1\tau(\bigcup A) \le c_2\tau(\XX)=c_3.
$$
Hence, the second set of inequalities in \eref{pf2eqn10} and the definition \eref{pf2eqn8} lead to 
$$
|\mathbb{G}(x)-\mathbb{G}(x')| \le cn^{-2r-q}(\epsilon_n^*)^{q-s}.
$$
Together with \eref{pf2eqn10}, this implies that
$$
\left|\|f-\mathbb{G}\|_\XX -\max_{x'\in \C'}|f(x')-\mathbb{G}(x')|\right|\le cn^{-2r-q}(\epsilon_n^*)^{q-s}.
$$
Therefore, \eref{pf2eqn9} leads to
\be\label{pf2eqn11}
\mathsf{Prob}\left(\|f-\mathbb{G}\|_\XX>t + cn^{-2r-q}(\epsilon_n^*)^{q-s}\right)\le c_1\varepsilon^{-Q}\exp(-ct^2n^{2r+q}(\epsilon_n^*)^{s-q}).
\ee
Choosing 
$$
t=\frac{c_2}{n^{r+q/2}(\epsilon_n^*)^{(s-q)/2}}\sqrt{\log (\varepsilon^{-1})}
$$
for a judicious choice of $c_2$, the right hand side of \eref{pf2eqn11} is $<1/2$. 
This shows that there exists a choice of $\omega_A$'s such that 
$$
\|f-\mathbb{G}(\{\omega_A\};\circ)\|_\XX \le \frac{c_2}{n^{r+q/2}(\epsilon_n^*)^{(s-q)/2}}\sqrt{\log (\varepsilon^{-1})}=c\left(\frac{\log N-\log \epsilon_{cN^{1/q}}^* }{N^{1+2r/q}(\epsilon_{cN^{1/q}}^*)^{(s-q)}}\right)^{1/2}.
$$
Finally, we note that 
$$\mathbb{G}(\{\omega_A\};x)=\sum_A (-1)^{k-1}\tau(A)\sum_{j=1}^{D_R+2}b_{j,A}G(x-x_{j,A}),
$$
where $b_{j,A}\ge 0$, $\sum_{j=1}^{D_R+2}b_{j,A}=1$ and $x_{j,A}\in S_n$. 
Since the number of $A$'s with a common intersection does not exceed $c$, and $\tau(A)\le c/n^q$, it follows that $\mathbb{G}(\{\omega_A\};\circ)$  is a $G$-network with at most $c(D_R+2)n^q$ terms, which can be expressed in the form given in \eref{approxbd} with the coefficients satisfying $|a_k|\le c/n^q\sim 1/N$.
\qed

\bhag{Conclusions}\label{concludesect}
We have proved an abstract theorem that addresses in a  unified manner the following two questions in the theory of machine learning: (1) dimension independent bounds on the degree of approximation by linear combinations of a kernel $G$, and (2) bounds on the degree of approximation on the out-of-sample Nystr\"om extension for a class of functions by networks trained on a compact subset of the ambient space. 
We are also interested in another problem in the area of information based complexity: tractability of integration in non-tensor product domains.

We have given a very general theorem, Theorem~\ref{maintheo}, to answer all of these questions in one stroke.
Our theorem combines the best aspects of both the probabilistic approach typically used in order to get dimension independent bounds given a kernel representation, as well as the classical approximation theory approach to ensure that the smoother the target function, the better is the rate of approximation, without saturation.

Necessarily, the theorem is rather abstract, but we have illustrated a few applications. In particular, we have developed  dimension independent bounds on ReLU-type networks on the sphere (Corollary~\ref{relucor}), and hence, on the Euclidean space as explained in \cite{bach2014, dingxuanpap}. 
In Corollary~\ref{cubecor}, we have given similar bounds for approximation by certain radial basis function networks. 
The error bounds in the context of manifold learning as well the the bounds on an out-of-sample extension are given in Corollary~\ref{manifoldcor}.

We have argued in \cite{dingxuanpap} that the superiority of deep networks over shallow networks stems from the fact that deep networks can utilize any compositional structure in the target function, thereby mitigating the curse of dimensionality by the ``blessing of compositionality''. 
Our results above indicate that the degree of approximation alone, without any requirement for robust parameter selection, is not adequate to explain this superiority. 
For example, for a deep ReLU network with a binary tree structure receiving 1024 inputs, the degree of approximation by itself would give an accuracy of (up to a logarithmic term) $\O(N^{-1.25})$, while the same for a shallow network is $\O(N^{-0.5015})$. 
When a robust parameter selection is required then the two estimates are $\O(N^{-1})$ and $\O(N^{-0.002})$ respectively.

We point out a philosophical comment. In general, the usual machine learning paradigm works with a split of the generalization error into bias and variance term, with a further split of the bias into approximation error (degree of approximation) and sampling error (empirical risk minimization). 
Usually, one pays attention only to the degree of approximation, ignoring any details of how it is achieved. 
This estimate is then used merely as a guideline for setting up the empirical risk minimization problem, whose solution has nothing to do with the minimizer in the degree of approximation estimation.
In turn, this requires a trade-off. 
The degree of approximation gets better with the increase in the number of parameters, but the complexity of the minimization problem gets higher.
Therefore, a careful balance is required.
Our result calls into question  this paradigm, pointing out the need to impose some further conditions on the estimate on the degree of approximation, hopefully, prompting the development of a new paradigm where the split between approximation and sampling errors is no longer necessary.
In the case when the marginal distribution of the independent variable is known to be supported on a compact, smooth, Riemannian manifold, we have developed a full theory of machine learning without using this split (e.g., \cite{mauropap, eignet, heatkernframe, compbio}).

Coming to the question of tractability of integration, there is a vast amount of literature investigating conditions under which quadrature formulas based on $N$ points can be constructed for integration with respect to a tensor product weight on a high dimensional cube, so as to achieve an error bound of the form (up to logarithmic terms) $c_1N^{-c}$, where $c$ is independent of the dimension of the cube, and $c_1$ is dependent at most polynomially on the dimension.
Although the functions to be integrated do have the form \eref{functiondef}, the bounds in the literature typically do not improve with the smoothness of the functions.
Corollary~\ref{tractablecor} shows the existence of quadrature formulas for far more general domains and measures, without any tensor product structure, which give the error bounds for integration which are of the form $\O(N^{-c})$, although the constants may depend upon the dimension.
Moreover, the bounds are better for smoother functions.


\end{document}